\documentclass{article}

\usepackage{microtype}
\usepackage{graphicx}
\usepackage{subcaption}

\usepackage{hyperref}

\usepackage[preprint]{icml2026}

\renewcommand{\toptitlebar}{%
  \vspace{-20pt}
  \par\noindent
  \begin{minipage}{\textwidth}
    \includegraphics[height=0.65cm]{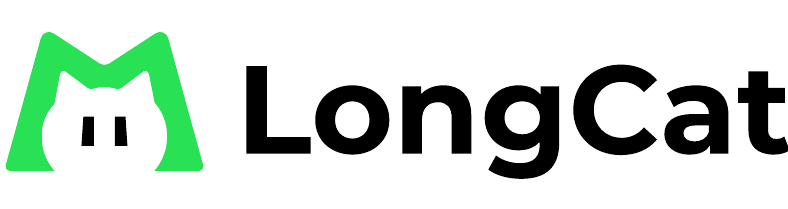}%
    \hfill
    \includegraphics[height=0.82cm]{figures/logos/nju.jpg}%
  \end{minipage}
  \par\vspace{5pt}
  \hrule height1pt
  \vskip .25in
}

\usepackage{amsmath}
\usepackage{amssymb}
\usepackage{mathtools}
\usepackage{amsthm}
\usepackage{multirow}
\usepackage{diagbox}
\usepackage{makecell}

\usepackage{times}
\usepackage{soul}
\usepackage{url}
\usepackage{booktabs}

\usepackage{times}
\usepackage{latexsym}

\usepackage[T1]{fontenc}

\usepackage{nicefrac}       
\usepackage{microtype}      
\usepackage{newunicodechar}
\usepackage{xcolor}         
\usepackage{colortbl}
\usepackage{tcolorbox}
\usepackage{enumitem}
\usepackage{fontawesome5}

\usepackage[capitalize,noabbrev]{cleveref}

\theoremstyle{plain}
\newtheorem{theorem}{Theorem}[section]
\newtheorem{proposition}[theorem]{Proposition}
\newtheorem{lemma}[theorem]{Lemma}

\theoremstyle{definition}
\newtheorem{definition}[theorem]{Definition}

\theoremstyle{remark}

\usepackage[textsize=tiny]{todonotes}

\icmltitlerunning{$V_0$: A Generalist Value Model for Any Policy at State Zero}

\begin{document}

\icmltitle{$V_0$: A Generalist Value Model for Any Policy at State Zero}

\begin{center}
\vspace{-10pt}

{
\textbf{Yi-Kai Zhang}\textsuperscript{1,2,4}, 
\textbf{Zhiyuan Yao}\textsuperscript{3,4}, 
\textbf{Hongyan Hao}\textsuperscript{4},
\textbf{Yueqing Sun}\textsuperscript{4}, \\
\textbf{Qi Gu}\textsuperscript{4,\raisebox{0.4pt}{\scriptsize\faIcon[regular]{envelope}}}, 
\textbf{Hui Su}\textsuperscript{4}, 
\textbf{Xunliang Cai}\textsuperscript{4}, 
\textbf{De-Chuan Zhan}\textsuperscript{1,2},
\textbf{Han-Jia Ye}\textsuperscript{1,2,\raisebox{0.4pt}{\scriptsize\faIcon[regular]{envelope}}}
}
\vspace{5pt}

{
\textsuperscript{1}School of Artificial Intelligence, Nanjing University \\
\textsuperscript{2}National Key Laboratory for Novel Software Technology, Nanjing University \\
\textsuperscript{3}Zhejiang University \quad 
\textsuperscript{4}Meituan, China \\
}

\vspace{5pt}
\parbox{\linewidth}{\centering
    {Project Page:} {\small \url{https://now-join-us.github.io/V0}}
}
\vspace{3pt}

\end{center}

\vspace{15pt}

\begingroup
  \renewcommand{\thefootnote}{\raisebox{0.4pt}{\scriptsize\faIcon[regular]{envelope}}}
  
  \footnotetext{Correspondence to \texttt{guqi03@meituan.com} and \texttt{yehj@lamda.nju.edu.cn}.}
\endgroup

\renewcommand{\thefootnote}{\arabic{footnote}}

\begin{abstract}
Policy gradient methods rely on a baseline to measure the relative advantage of an action, ensuring the model reinforces behaviors that outperform its current average capability. In the training of Large Language Models (LLMs) using Actor-Critic methods (\textit{e.g.}, PPO), this baseline is typically estimated by a Value Model (Critic) often as large as the policy model itself. However, as the policy continuously evolves, the value model requires expensive, synchronous incremental training to accurately track the shifting capabilities of the policy. To avoid this overhead, Group Relative Policy Optimization (GRPO) eliminates the coupled value model by using the average reward of a group of rollouts as the baseline; yet, this approach necessitates extensive sampling to maintain estimation stability.
In this paper, we propose $V_0$, a Generalist Value Model capable of estimating the expected performance of any model on unseen prompts without requiring parameter updates. We reframe value estimation by treating the policy's dynamic capability as an explicit context input; specifically, we leverage a history of instruction-performance pairs to dynamically profile the model, departing from the traditional paradigm that relies on parameter fitting to perceive capability shifts.
Focusing on value estimation at State Zero (\textit{i.e.}, the initial prompt, hence $V_0$), our model serves as a critical resource scheduler. During GRPO training, $V_0$ predicts success rates prior to rollout, allowing for efficient sampling budget allocation; during deployment, it functions as a router, dispatching instructions to the most cost-effective and suitable model. Empirical results demonstrate that $V_0$ significantly outperforms heuristic budget allocation and achieves a Pareto-optimal trade-off between performance and cost in LLM routing tasks.
\end{abstract}

\section{Introduction}

In the post-training phase of Large Language Models (LLMs), Reinforcement Learning with Verifiable Rewards (RLVR) has emerged as a dominant paradigm~\cite{longcat_flash,Qwen3,DeepSeek_R1}. A fundamental requirement of these policy gradient methods is a robust baseline to estimate the relative advantage of an action, ensuring the model reinforces behaviors that outperform its current average capability.
Traditionally, Actor-Critic architectures (\textit{e.g.}, PPO) address this by maintaining a parameterized Value Model (Critic). While effective at variance reduction, this approach introduces a severe \textit{coupling dilemma}: as the policy $\pi$ evolves, the value model $V^\pi$ must be synchronously and incrementally trained to track the non-stationary target, incurring massive computational costs and memory overhead~\cite{VAPO,ppo_mcts}.
Group Relative Policy Optimization (GRPO)~\cite{DeepSeek_Math} eliminates the independent value model entirely, instead approximating the baseline via the mean reward of group rollouts. However, this essentially shifts the cost from training to sampling: to prevent high variance or reward collapse (where rewards become uniform zeros or ones) in complex tasks, GRPO necessitates extensive Monte Carlo sampling, creating a new bottleneck in computational efficiency and training stability~\cite{knapsack_rl,m2po,areal}.

\begin{figure}[t]
    \centering
    \includegraphics[width=\textwidth]{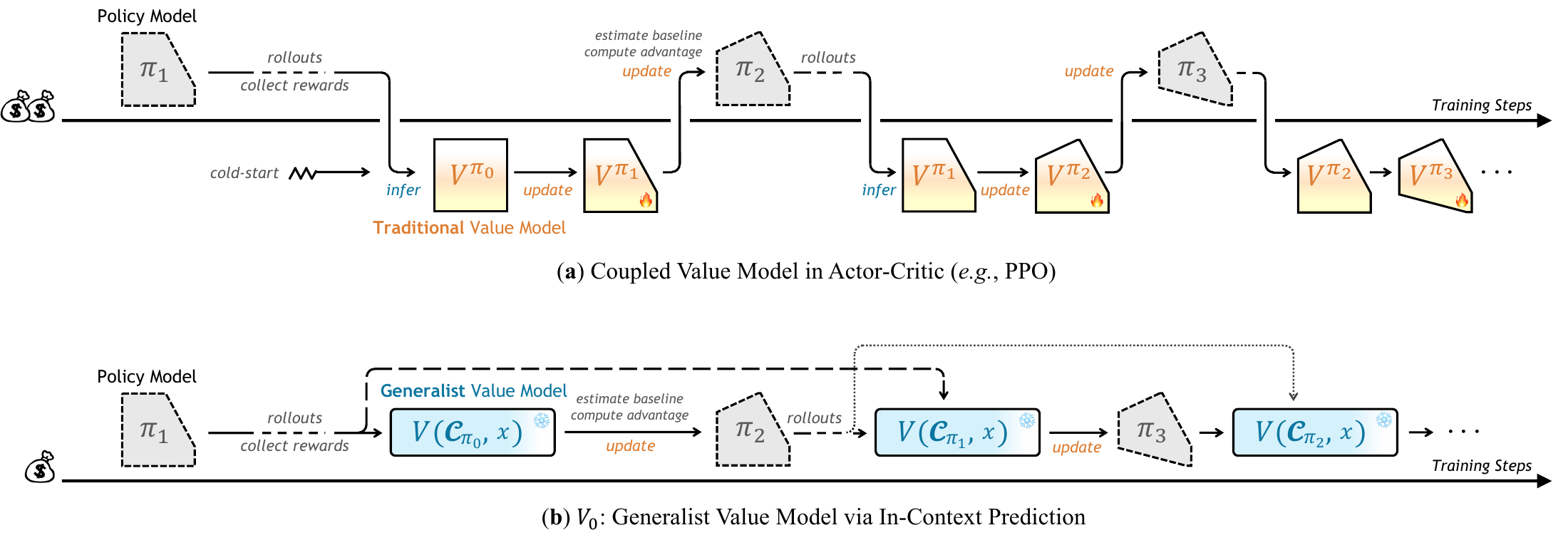}
    \caption{\textbf{Comparison of Training Paradigms}: Traditional Value Model vs. $V_{0}$.
Top: The traditional Actor-Critic paradigm (\textit{e.g.}, PPO) suffers from a coupling dilemma, where the value model $V^{\pi}$ requires continuous, synchronous parameter updates to track the evolving policy $\pi$.
Bottom: Our proposed $V_{0}$ reframes value estimation as In-Context Learning (ICL). By treating historical instruction-performance pairs $\mathcal{C}_{\pi}$ as explicit context, $V_{0}$ perceives policy capability shifts through a single forward pass.}
    \label{fig:motivation}
\end{figure}

In this paper, we propose $\boldsymbol{V_0}$, a generalist value model designed to resolve this efficiency-stability trade-off. Our core insight is to reframe value estimation: instead of treating the policy $\pi$ as an implicit variable hidden within the value model's parameters (requiring continuous training), we treat it as an explicit context input $\mathcal{C}_{\pi}$. Formally, this shifts the estimation paradigm from a parameterized function $V^\pi(s_0)$ to a conditional prediction $V(\mathcal{C}_{\pi}, s_0)$, where $\mathcal{C}_{\pi}$ consists of a sequence of historical query-performance pairs.
This design allows $V_0$ to dynamically \textit{read} the current capabilities of any policy without gradient updates, effectively decoupling value estimation from policy evolution.
In this paper, we focus specifically on State Zero (\textit{i.e.}, the initial prompt), and $V_0$ acts as a strategic resource scheduler: in the context of GRPO training, it predicts success probabilities prior to rollout, enabling adaptive budget allocation to avoid wasteful sampling on effectively solved or unsolvable tasks; in model deployment, aligning with inference-time compute scaling, it serves as a router to dispatch instructions to the most cost-effective model ensuring Pareto-optimal performance.

Implementing $V_0$ requires an architecture that can simultaneously comprehend semantic instructions and accurately infer statistical patterns from historical performance. Since standard LLMs often struggle with precise numerical estimation, we design a hybrid \textit{Semantic-Perception to Structured-Reasoning} architecture. We employ an Embedding Backbone to map the capability history (as context) and the target query into high-dimensional semantic vectors. To bridge the gap between the high-dimensional, continuous nature of these embeddings with the requirement for precise, structured logical inference, we introduce a trainable \textit{Residual Query Adapter}. This module extracts features correlated with model capability, projecting them into compact vectors for our TabPFN~\cite{tabpfn_v1,tabpfn_2_5} inference head. Leveraging its pre-trained Bayesian inference capabilities, TabPFN treats the historical performance of $\pi$ as a reference set, performing in-context learning in a \textit{single forward pass} to construct decision boundaries and infer success probabilities. Thus, $V_0$ moves beyond memorizing parameters to learning the \textit{meta-knowledge} of capability estimation.

Achieving robust generalization across arbitrary policies, however, presents a fundamental challenge. Through a mutual information analysis of joint training, we discover that inherent capability gaps between policies cause the mutual information $I(Y; X, \mathcal{C})$ ($Y$ is performance, $X$ is query, and $\mathcal{C}$ is context), to decompose into a dominant ``shortcut term'' $I(Y; \mathcal{C})$. This implies that the model degenerates into a heuristic that judges policy strength based solely on the capability context, neglecting the critical interaction $I(Y; X \mid \mathcal{C})$. To mitigate this, we propose a composite loss strategy: a Pairwise Ranking Loss (based on the Bradley-Terry model) to enforce relative score separation within the same context, combined with soft cross-entropy to calibrate absolute probabilities.
Empirical results demonstrate that $V_0$ exhibits reliable scaling potential and generalization. Our main contributions are:
\begin{itemize}[noitemsep,topsep=0pt,leftmargin=*]
    \item We propose $V_0$, a framework that decouples the value model from policy parameters by reframing estimation as a conditional prediction problem, $V(\mathcal{C}_{\pi}, s_0)$. To realize this, we introduce a hybrid \textit{Semantic-Perception to Structured-Reasoning} architecture, making such context-aware policy assessment feasible for the first time.
    \item We identify the shortcut bias in training $V_0$ via mutual information analysis and propose a composite objective (pairwise ranking loss + soft CE) to resolve it.
    \item We demonstrate that $V_0$ tracks the evolution during GRPO training, offering superior stability over coupled value models. Additionally, $V_0$ efficiently solves the cold-start problem in \textit{Budget Allocation} and approaches the performance-cost Pareto frontier in \textit{Inference Routing}.
\end{itemize}
\section{Preliminaries}

In this section, we formalize value estimation in LLM reinforcement learning as a conditional prediction task governed by in-context capability. We further introduce TabPFN as our inference backbone. For clarity, we denote the input query (state zero, $s_0$) as $x \in \mathcal{D}_{\text{prompt}}$.

\paragraph{Value Estimation in Post-training RL.}
We model this part as a Markov Decision Process (MDP)~\cite{for_mdp_cite}. Given a query $x$, a policy $\pi_\theta$ generates a response $y$, which is evaluated by a reward function $\mathcal{R}(x, y) \in \{0, 1\}$ in the context of RLVR. Traditional methods like PPO rely on a parameterized value function $V_\phi(x)$ to estimate the expected return. Distinct from Outcome Reward Models (ORMs) that approximate the ground-truth reward $\mathcal{R}(x, y)$ \textit{after} generation~\cite{icl_llm,grm}, $V_\phi(x)$ aims to predict with the policy capability \textit{before} generation.
However, $V_\phi$ in PPO introduces a coupling dilemma: it must synchronously track the non-stationary distribution of the evolving $\pi_\theta$. While \textit{value-free} methods like GRPO obviate this by approximating the baseline via group averages ($V(x) \approx \frac{1}{G}\sum R_i$), they suffer from high Monte Carlo variance, where sparse rewards often collapse into uniform values, yielding uninformative gradients.

\paragraph{In-Context Capability Representation.}
To resolve the coupling dilemma while retaining the variance-reduction benefits of value functions, we reframe value estimation from parameter fitting to \textit{In-Context Learning (ICL)}~\cite{tabpfn}. In this paradigm, the policy $\pi$ is no longer treated as a latent variable implicit in the weights $\phi$, but is explicitly represented as a context set $\mathcal{C}_\pi = \{(x_i, r_i)\}_{i=1}^N$ of historical query-performance pairs. Consequently, value estimation transforms into inferring the Posterior Predictive Distribution (PPD)~\cite{ppd} for a target query $x$:
\begin{equation}
P(r \mid x, \mathcal{C}_\pi) = \int P(r \mid x, \mathcal{M}) P(\mathcal{M} \mid \mathcal{C}_\pi) \, d\mathcal{M}
\end{equation}
where $\mathcal{M}$ represents the underlying capability model derived from observations. This shift allows $V_0$ to predict $V(x, \mathcal{C}_\pi)$ by dynamically perceiving the capability boundaries of any policy through its context $\mathcal{C}_\pi$, enabling zero-gradient adaptation to unseen policies.

\section{Related Work}

\paragraph{Generalist Value Representation and In-Context Attempts.}
To construct more robust value estimation, recent research has explored alternative parameterizations~\cite{soft_policy,large_language_model_priors}, such as VRPO~\cite{VRPO} and RELC~\cite{RELC}, which derive intrinsic rewards. However, these models may suffer from feedback saturation when the policy's capability exceeds the critic's evaluation ceiling. In the realm of decoupled In-Context Learning (ICL), GVL~\cite{GVL} uses cross-task progress to evaluate objective states, whereas DVPO~\cite{DVPO} attempts to probe capability via sequence distributions. DVPO may lack discriminative power when different policies generate similar response prefixes (see \autoref{app:state_only_counter_example} for more cases). In contrast, $V_0$ explicitly encodes model capability by ingesting large historical contexts, synthesizing semantic and statistical dimensions to avoid these pitfalls.

\paragraph{Predictive Capability Boundaries for Resource Allocation.}
Our work targets value estimation at state zero, effectively predicting capability boundaries to optimize resource allocation across the LLM lifecycle. During training, $V_0$ functions as an adaptive budget allocator; unlike methods such as Knapsack RL~\cite{knapsack_rl} that rely on lagged evaluations, $V_0$ provides real-time, zero-gradient adaptation to prevent wasted compute on mastered or hard samples~\cite{greso,cures,DARS,DOTS}. During inference, it advances beyond routing based on latent representations~\cite{EmbedLLM,RouterDC,RouteLLM,Model-SAT} or reference anchors~\cite{avengers,UMR} to explicit capability-aware dispatching, enabling the selection of the most economical model from a candidate fleet for any given prompt.

For more discussion, please refer to~\autoref{app:extended_related_work}.
\section{Method}
\label{sec:method}
In this section, we formalize Generalist Value Estimation as a conditional prediction task and detail the $V_0$ framework. We first describe the transition from implicit parameter fitting to explicit contextual inference. We then introduce the hybrid \textit{Semantic-Perception to Structured-Reasoning} architecture, specifically the Residual Query Adapter that bridges high-dimensional semantics with Bayesian inference. Finally, we analyze the shortcut learning phenomenon via Mutual Information (MI) and derive a debiased objective.

\subsection{Value Estimation as Contextual Inference}
\begin{quote}
    \centering
    \textit{``It is what you do that defines you.''} \\
    \hfill \hfill --- \small \textit{Batman Begins}
\end{quote}

\begin{figure*}[t]
    \centering
    \includegraphics[width=0.94\textwidth]{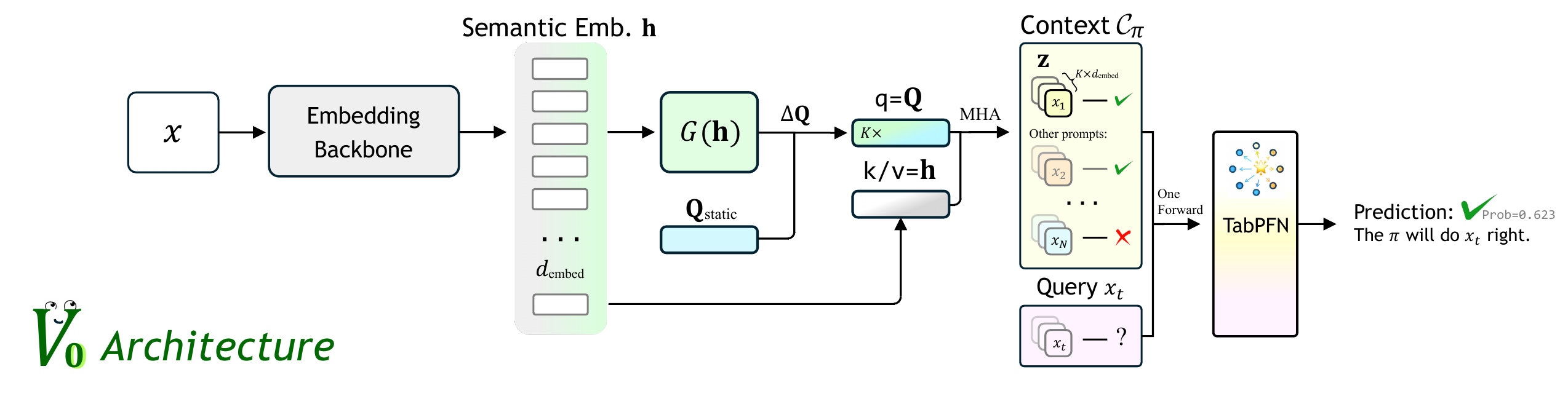}
    \caption{\textbf{The $V_0$ Architecture.} A Semantic Backbone extracts embedding $\mathbf{h}$, which the Residual Query Adapter projects into structured features using queries $\mathbf{Q}_{\text{static}}$ and dynamic $\Delta \mathbf{Q}$. After obtaining context $\mathcal{C}_{\pi}$ and query $x$, they are fed into the TabPFN inference head.}
    \label{fig:method}
\end{figure*}

Traditional value models $V^\pi(x)$ are intrinsically coupled with a specific policy $\pi$, necessitating synchronous updates whenever the policy parameters shift. Inspired by the philosophy that an entity's identity is defined by its observable actions, we propose that a policy's capability is best characterized by its historical behavior.
We break the coupling dilemma by reframing value estimation as an \textbf{In-Context Learning (ICL)} problem. Instead of embedding policy information into latent weights, we represent the capability of an arbitrary policy $\pi$ via an explicit context set $\mathcal{C}_\pi$ consisting of historical query-performance pairs:
\begin{equation}
    \mathcal{C}_\pi = \{(x_i, r_i) \mid x_i \in \mathcal{D}_{\text{context}}, r_i \in \{0, 1\}\}_{i=1}^N
\end{equation}
where $r_i$ denotes the binary success on query $x_i$. The objective of $V_0$ is to learn a generalist meta-function that infers the performance on a target query $x_t$ conditioned on $\mathcal{C}_\pi$:
\begin{equation}
    \hat{v} = V_0(x_t, \mathcal{C}_\pi) \approx P(r_t=1 \mid x_t, \mathcal{C}_\pi)
\end{equation}
By treating $\pi$ as a capability input, $V_0$ achieves \textit{zero-gradient adaptation} to unseen policies, allowing it to dynamically perceive capability boundaries without parameter updates.

\subsection{$V_0$ Model Architecture}

The implementation of $V_0$ requires bridging the gap between high-dimensional natural language semantics and low-shot statistical inference. As illustrated in~\autoref{fig:method}, our architecture consists of three components.

\paragraph{Semantic-Perception Backbone.}
To map discrete instructions into a continuous manifold, we utilize a pre-trained embedding encoder $f_{\text{enc}}$. For both context queries $\{x_i\}$ and the target $x_t$, we extract the global semantic representation $\mathbf{h} = \text{Pool}(f_{\text{enc}}(x)) \in \mathbb{R}^{d_{\text{embed}}}$. This step captures deep semantic features, domain attributes, and latent difficulty information, providing a rich semantic foundation.

\paragraph{Residual Query Adapter.}
Directly feeding $\mathbf{h}$ into a statistical head is problematic due to a \textit{feature gap}: TabPFN is designed for structured tabular data (where specific columns represent fixed meanings like ``age'' or ``income''), whereas LLM embeddings are highly entangled.
We design the \textit{Residual Query Adapter} as a ``Semantic Prism.'' Just as a prism refracts natural light (entangled information) into distinct spectral bands, our adapter projects the mixed semantics of $\mathbf{h}$ into $K$ independent feature channels.
We employ a set of learnable \textit{Static Queries} $\mathbf{Q}_{\text{static}}$ to capture general capability dimensions (\textit{e.g.}, ``arithmetic complexity''). To handle instance-specific nuances, we introduce a residual mechanism where a generator $G$ produces dynamic offsets $\Delta \mathbf{Q}$ conditioned on $\mathbf{h}$ as $\mathbf{Q} = \mathbf{Q}_{\text{static}} + G(\mathbf{h})$.
The final structured features $\mathbf{z}$ are obtained via Multi-Head Attention (MHA), using $\mathbf{Q}$ to probe the semantic backbone:
\begin{equation}
\mathbf{z} = \text{MHA}(\texttt{q=}\mathbf{Q}, \, \texttt{k/v=}\mathbf{h}) \in \mathbb{R}^{K \times d_{\text{embed}}}
\end{equation}
This process ensures \textit{column alignment}: the resulting $\mathbf{z}$ possesses a fixed coordinate system essential for Bayesian inference, where each dimension implicitly characterizes a consistent capability factor.

\paragraph{Probabilistic In-Context Head.}
We employ TabPFN as our inference core. TabPFN treats the transformed pairs $\{(\mathbf{z}_i, r_i)\}_{i=1}^N$ as observations. In a \textit{single forward pass}, it approximates the posterior predictive distribution (PPD)~\cite{ppd} for the target:
\begin{equation}
    \hat{r}_t \sim P(r \mid \mathbf{z}_t, \{(\mathbf{z}_i, r_i)\}_{i=1}^N)
\end{equation}
Essentially, the head dynamically measures the statistical correlation between $\mathbf{z}_t$ and history $\{\mathbf{z}_i\}$ via attention, inferring the reward distribution without gradient updates.

\subsection{A Mutual Information Perspective of Shortcuts}
\label{sec:mi_perspective}
\paragraph{Information Decomposition.}
Although we enforce a global label balance $P(Y=1) \approx 0.5$ to maximize entropy $H(Y)$, this does not imply conditional independence. The capabilities of different policies vary significantly, leading to a non-uniform conditional prior $P(Y=1 \mid \mathcal{C})$. To analyze the optimization dynamics, we decompose the Mutual Information between the target label $Y$ and the joint input $(X, \mathcal{C})$:
\begin{equation}
    I(Y; X, \mathcal{C}) = \underbrace{I(Y; \mathcal{C})}_{\text{Context Shortcut}} + \underbrace{I(Y; X \mid \mathcal{C})}_{\text{Causal Reasoning}}
\end{equation}
Here, $I(Y; \mathcal{C})$ quantifies the information gain derived solely from the historical performance, independent of the current query $X$. Conversely, $I(Y; X \mid \mathcal{C})$ represents the robust reasoning capability that $V_0$ aims to learn. As formalized below in~\autoref{thm:shortcut}, minimizing cross-entropy loss naively encourages the model to exploit the context shortcut.

\begin{tcolorbox}[
    colback=gray!3,
    colframe=black,
    boxrule=0.5pt,
    arc=2pt,
    left=6pt,
    right=6pt,
    top=4pt,
    bottom=4pt
]
\begin{theorem}
\label{thm:shortcut}
Let $\mu(\mathcal{C}) \triangleq P(Y=1 \mid \mathcal{C})$ denote the latent capability prior of context $\mathcal{C}$.
If $\operatorname{Var}[\mu(\mathcal{C})] > 0$, then $I(Y; \, \mathcal{C}) > 0$, and we have:
\[
\min_{\theta} \, \mathcal{L}_{\text{CE}}(P(Y \mid \mathcal{C})) < H(Y).
\]
Thus, a model minimizing $\mathcal{L}_{\text{CE}}$ can strictly reduce error by fitting prior $\mu(\mathcal{C})$ alone, independent of the input $X$.
\end{theorem}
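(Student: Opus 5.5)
The plan is to write the cross-entropy objective in information-theoretic terms and then use the fact that the Bayes-optimal predictor built only from $\mathcal{C}$ attains the conditional entropy $H(Y\mid\mathcal{C})$. Since $Y$ is binary, the population cross-entropy of any predictor $q_\theta(Y\mid\mathcal{C})$ splits as
\[
\mathcal{L}_{\text{CE}}(q_\theta)=\mathbb{E}_{\mathcal{C}}\big[-\mu(\mathcal{C})\log q_\theta(1\mid\mathcal{C})-(1-\mu(\mathcal{C}))\log q_\theta(0\mid\mathcal{C})\big].
\]
This equals $H(Y\mid\mathcal{C})+\mathbb{E}_{\mathcal{C}}\,\mathrm{KL}\big(\mathrm{Ber}(\mu(\mathcal{C}))\,\|\,q_\theta(\cdot\mid\mathcal{C})\big)$. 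I will assume the model class is rich enough to represent $\mu$, or at least approximate it arbitrarily well. Under that assumption, choosing $q_\theta(1\mid\mathcal{C})=\mu(\mathcal{C})$ gives $\min_\theta\mathcal{L}_{\text{CE}}=H(Y\mid\mathcal{C})=H(Y)-I(Y;\mathcal{C})$. The theorem therefore reduces to showing $I(Y;\mathcal{C})>0$ whenever $\operatorname{Var}[\mu(\mathcal{C})]>0$.

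For that step I will argue by contraposition. Suppose $I(Y;\mathcal{C})=0$. Then $Y$ and $\mathcal{C}$ are independent, so $P(Y=1\mid\mathcal{C})=P(Y=1)$ almost surely. This makes $\mu(\mathcal{C})$ almost surely constant, which contradicts $\operatorname{Var}[\mu]>0$.

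A more quantitative route gives the same conclusion. Because the binary entropy $h$ is strictly concave, Jensen's inequality gives
\[
H(Y\mid\mathcal{C})=\mathbb{E}[h(\mu(\mathcal{C}))]<h(\mathbb{E}\mu(\mathcal{C}))=h(P(Y=1))=H(Y).
\]
The inequality is strict exactly when $\mu(\mathcal{C})$ is non-degenerate, that is, when its variance is positive. Strong concavity of $h$, with $h''(p)=-1/(p(1-p))\le -4$, even yields the explicit gap $I(Y;\mathcal{C})\ge 2\operatorname{Var}[\mu(\mathcal{C})]$ in nats. I would include this bound because it makes ``shortcut strength grows with capability spread'' precise.

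Finally, the predictor $q(\cdot\mid\mathcal{C})=\mathrm{Ber}(\mu(\mathcal{C}))$ never reads $X$, so the strict reduction below $H(Y)$ is achieved independently of the query. This gives the closing sentence of the theorem.

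The main obstacle is not the inequality itself but interpreting ``$\min_\theta$'' correctly. The argument needs the realizability assumption that the parametrized family contains, or approximates, $\mathcal{C}\mapsto\mu(\mathcal{C})$. It also needs $H(Y)$ to be read as the loss of the best constant predictor, which is why the paper enforces the balance $P(Y=1)\approx 0.5$. I would state realizability explicitly; for TabPFN-style heads it is reasonable, since the head sees the labels $r_i$ in context and can output their empirical mean. If exact realizability is doubtful, I would fall back on an approximation argument: the KL term can be made smaller than the positive gap $I(Y;\mathcal{C})$, which is enough for strictness.
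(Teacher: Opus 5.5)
Your proposal is correct and follows the paper's own argument: strict concavity of the binary entropy plus Jensen's inequality gives $H(Y\mid\mathcal{C})=\mathbb{E}_{\mathcal{C}}[\mathcal{H}_b(\mu(\mathcal{C}))]<H(Y)$, and the $X$-blind predictor $\mathrm{Ber}(\mu(\mathcal{C}))$ attains this loss. Your version is also slightly more general and more careful than the paper's: writing $H(Y)=\mathcal{H}_b(\mathbb{E}[\mu(\mathcal{C})])$ removes the balance assumption $\mathbb{E}[\mu(\mathcal{C})]=0.5$, which the paper's proof invokes but the theorem never states, and you make explicit the realizability assumption behind $\min_\theta$ that the paper leaves implicit. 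One correction to your closing remark: balance is not needed for $H(Y)$ to be the loss of the best constant predictor, since that holds for any marginal; balance only makes that best constant equal to $0.5$ and makes $H(Y)$ equal to one bit.
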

\end{tcolorbox}

\paragraph{Debiasing via Shift-Invariant Ranking.}
To decouple the prediction from the context prior $\mu(\mathcal{C})$, we introduce intra-context ranking. We define the output as a logit score $s(x, \mathcal{C})$, such that the final value probability is $V_0(x, \mathcal{C}) = \sigma(s(x, \mathcal{C}))$. We construct training pairs $(x_i, x_j)$ drawn from the \textit{same} context $\mathcal{C}$ with opposing labels (where $y_i \succ y_j$), and minimize the Bradley-Terry ranking loss on the logits:
\begin{equation}
    \mathcal{L}_{\text{rank}} = -\mathbb{E}_{\mathcal{C} \sim \mathcal{D}} \left[ \log \sigma \left( s(x_i, \mathcal{C}) - s(x_j, \mathcal{C}) \right) \right]
\end{equation}
This objective forces the model to discriminate based on the relative difficulty of queries rather than the absolute capability of the policy.

\begin{tcolorbox}[
    colback=gray!3,
    colframe=black,
    boxrule=0.5pt,
    arc=2pt,
    left=6pt,
    right=6pt,
    top=4pt,
    bottom=4pt
]
\begin{theorem}
\label{thm:invariance}
Let $\tilde{s}(x, \mathcal{C}) = s(x, \mathcal{C}) + b(\mathcal{C})$ be a scoring function perturbed by an arbitrary context-dependent bias $b(\mathcal{C})$. The gradient of the ranking loss with respect to parameters $\phi$ of $V_0$ satisfies:
\[
\nabla_{\phi} \mathcal{L}_{\text{rank}}(\tilde{s}) = \nabla_{\phi} \mathcal{L}_{\text{rank}}(s).
\]
\end{theorem}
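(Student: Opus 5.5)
The plan is to show that the perturbation $b(\mathcal{C})$ cancels pointwise inside the ranking loss, before any differentiation. The gradient identity then follows because the two losses are literally the same function of $\phi$. The key structural fact is that every training pair $(x_i, x_j)$ entering $\mathcal{L}_{\text{rank}}$ is drawn from the \emph{same} context $\mathcal{C}$. Both scores in the pair therefore receive the identical offset $b(\mathcal{C})$.

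Concretely, I would first fix a context $\mathcal{C}$ and a pair $(x_i, x_j)$ with $y_i \succ y_j$, and compute
\[
\tilde{s}(x_i, \mathcal{C}) - \tilde{s}(x_j, \mathcal{C}) = s(x_i, \mathcal{C}) + b(\mathcal{C}) - s(x_j, \mathcal{C}) - b(\mathcal{C}) = s(x_i, \mathcal{C}) - s(x_j, \mathcal{C}).
\]
The per-pair term $-\log \sigma(\cdot)$ is a function of this difference only, so it coincides for $\tilde{s}$ and $s$. Taking the expectation over $\mathcal{C} \sim \mathcal{D}$ and over pairs then gives $\mathcal{L}_{\text{rank}}(\tilde{s}) = \mathcal{L}_{\text{rank}}(s)$ as functions of $\phi$.

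Differentiating both sides with respect to $\phi$ yields the claim. I would also record the explicit form
\[
\nabla_\phi \mathcal{L}_{\text{rank}} = -\mathbb{E}\bigl[(1-\sigma(\Delta))\,\nabla_\phi \Delta\bigr], \qquad \Delta = s(x_i,\mathcal{C}) - s(x_j,\mathcal{C}).
\]
Since $\nabla_\phi \Delta$ involves only the difference of scores, any dependence of $b$ on $\phi$ also drops out, because $\nabla_\phi b(\mathcal{C}) - \nabla_\phi b(\mathcal{C}) = 0$.

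The only delicate point is interpretive rather than computational. One must make sure that $b(\mathcal{C})$ is allowed to depend on $\phi$: the intended reading is a learned context shortcut, so it may. It also has to be interchangeable between the two members of a pair, which holds because it depends only on $\mathcal{C}$ and not on $x$. I would state these assumptions explicitly, namely that pairs share the same context and that $b$ does not depend on $x$. I would also note that differentiation can be exchanged with the expectation under standard regularity, which is not even needed once the losses are shown equal as functions. The consequence worth remarking on is that $\mathcal{L}_{\text{rank}}$ provides no gradient signal toward fitting $\mu(\mathcal{C})$, complementing \autoref{thm:shortcut}.
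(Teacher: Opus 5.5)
Your proposal is correct and follows essentially the same route as the paper: the paper likewise shows that $\Delta\tilde{s}_{ij} = \Delta s_{ij}$ for pairs drawn from the same context $\mathcal{C}$, writes the gradient as $(\sigma(\Delta s_{ij}) - 1)\nabla_\phi\bigl(s(x_i,\mathcal{C}) - s(x_j,\mathcal{C})\bigr)$, and concludes that neither the loss nor its gradient depends on $b(\mathcal{C})$. Your observation that the losses agree as functions of $\phi$, so the cancellation holds even if $b$ depends on $\phi$, is a slightly more careful statement of the same argument.
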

\end{tcolorbox}
\noindent \autoref{thm:invariance} ensures that any shared context bias $b(\mathcal{C})$ is eliminated by the logit difference $s(x_i) - s(x_j)$. This invariance renders the ranking objective orthogonal to the subspace of context shortcuts $I(Y; \mathcal{C})$.

\paragraph{Composite Optimization.}
While ranking eliminates bias, downstream tasks (e.g., risk-aware routing) require calibrated probabilities $V_0 \in [0, 1]$. We therefore optimize a hybrid objective to balance discrimination and calibration:
\begin{equation}
    \mathcal{L} = \alpha \mathcal{L}_{\text{rank}}(s) + (1-\alpha) \mathcal{L}_{\text{CE}}(V_0)
\end{equation}
This ensures $V_0$ learns the conditional interaction $I(Y; X \mid \mathcal{C})$ while maintaining probabilistic calibration.
We provide a detailed analysis and proof of this section in~\autoref{app:theory_proof}, followed by a diagnostic framework based on \textit{residual orthogonality} in~\autoref{app:residual_orthogonality} for empirical verification.

\section{Experiments}
\label{sec:experiments}
\begin{table*}[t]
    \centering
    \caption{\textbf{Performance Comparison of Value Estimation Methods} across Three Architectures during GRPO Training. We evaluate Intra-Group AUC (on the 1st epoch and all), Pairwise Accuracy, and Calibration MSE.}
    \label{tab:vm_results_1}
    \resizebox{\textwidth}{!}{%
        \begin{tabular}{lcccccccccccc}
            \toprule
            \multirow{3}{*}{\diagbox[width=12em, height=3.5em]{\textbf{Method}}{\textbf{Arch. \& Metrics}}} 
            & \multicolumn{4}{c}{\textbf{DeepSeek-R1-Distill-Qwen-1.5B}} & \multicolumn{4}{c}{\textbf{Qwen3-4B-Instruct-2507}} & \multicolumn{4}{c}{\textbf{Qwen2.5-7B-Instruct}} \\
            \cmidrule(lr){2-5} \cmidrule(lr){6-9} \cmidrule(lr){10-13}
             & \multicolumn{2}{c}{Intra AUC} & \multirow{2}{*}{\shortstack{Pair.\\Acc.}} & \multirow{2}{*}{\shortstack{Calib.\\MSE}} & \multicolumn{2}{c}{Intra AUC} & \multirow{2}{*}{\shortstack{Pair.\\Acc.}} & \multirow{2}{*}{\shortstack{Calib.\\MSE}} & \multicolumn{2}{c}{Intra AUC} & \multirow{2}{*}{\shortstack{Pair.\\Acc.}} & \multirow{2}{*}{\shortstack{Calib.\\MSE}} \\
            \cmidrule(lr){2-3} \cmidrule(lr){6-7} \cmidrule(lr){10-11}
             & 1st Ep. & All Eps. & & & 1st Ep. & All Eps. & & & 1st Ep. & All Eps. & & \\
            \midrule
            Prev-Epoch & -- & .835 & .419 & .445 & -- & .860 & .427 & .331 & -- & .728 & .374 & .776 \\
            Reward Model & .540 & .539 & -- & .624 & .613 & .629 & -- & .594 & .659 & .693 & -- & .613 \\
            $k$NN-Contextual & .692 & .818 & .597 & .403 & .652 & .861 & .496 & .295 & .683 & .754 & .422 & .642 \\
            Vanilla Value Model & .708 & .840 & .675 & .113 & .731 & .898 & .600 & .098 & .637 & .830 & .547 & .140 \\
            Step-wise Retrain VM & .769 & .757 & .876 & .187 & .703 & .710 & .792 & .213 & .692 & .701 & .854 & .144 \\
            \midrule
            \textbf{$\boldsymbol{V_0}$ (Ours)} & \textbf{.887} & \textbf{.913} & \textbf{.940} & \textbf{.072} & \textbf{.893} & \textbf{.904} & \textbf{.884} & \textbf{.098} & \textbf{.883} & \textbf{.879} & \textbf{.956} & \textbf{.099} \\
            \bottomrule
        \end{tabular}%
    }
\end{table*}

\begin{figure*}[t]
    \centering
    \includegraphics[width=\textwidth]{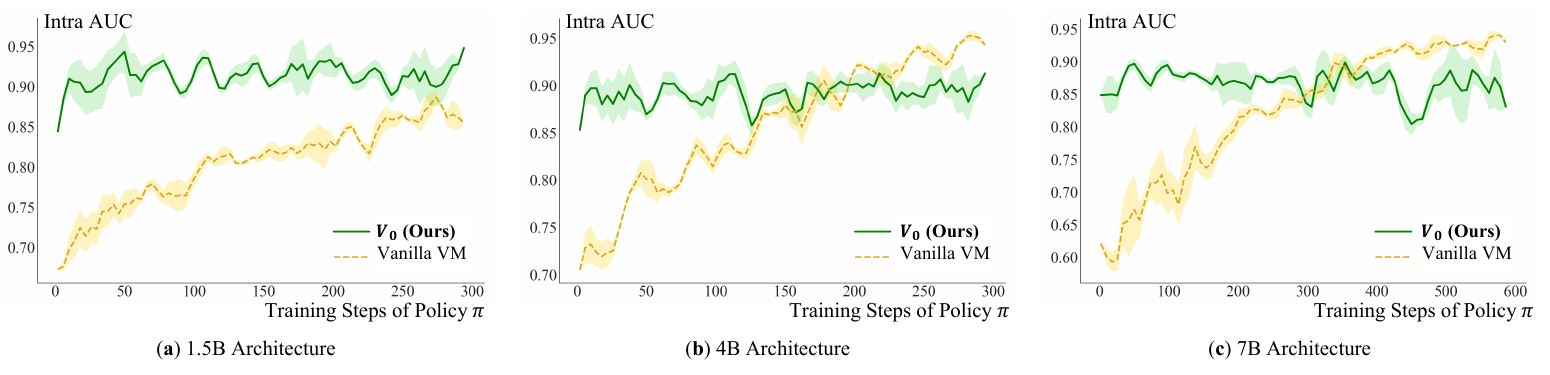}
    \caption{\textbf{Comparison of Value Estimation Stability during Policy Training}. We track the estimation performance (Intra-AUC) of $V_0$ and the Vanilla VM across the training trajectories of three different architectures. The horizontal axis is the training steps of the policy model $\pi$. While the Vanilla VM exhibits a performance lag and instability, $V_0$ maintains high, consistent accuracy from the very first step.}
    \label{fig:vm_stability}
\end{figure*}

In this section, we evaluate the performance of $V_0$ as a Generalist Value Model. We focus on two pivotal questions: (\textbf{1}) Stability: Can $V_0$ maintain consistent estimation accuracy despite the distribution shifts inherent in RL training? (\textbf{2}) Generalization: Does $V_0$ achieve zero-shot generalization across unseen prompts, policy models of varying capabilities, and domains with diverse difficulties and semantics?

\begin{table*}[t]
    \centering
    \caption{\textbf{Strict Generalization Performance on Held-out Samples}. Unlike the standard setting, test samples here are excluded from the historical training trajectories of all previous steps to prevent memory overfitting.}
    \label{tab:vm_results_2}
    \label{tab:strict_generalization}
    \resizebox{\textwidth}{!}{%
        \begin{tabular}{lccccccccc}
            \toprule
            \multirow{2}{*}{\textbf{Method}} & \multicolumn{3}{c}{\textbf{DeepSeek-R1-Distill-Qwen-1.5B}} & \multicolumn{3}{c}{\textbf{Qwen3-4B-Instruct-2507}} & \multicolumn{3}{c}{\textbf{Qwen2.5-7B-Instruct}} \\
            \cmidrule(lr){2-4} \cmidrule(lr){5-7} \cmidrule(lr){8-10}
             & Intra AUC & Pair. Acc. & Calib. MSE & Intra AUC & Pair. Acc. & Calib. MSE & Intra AUC & Pair. Acc. & Calib. MSE \\
            \midrule
            Vanilla Value Model & .560 & .467 & .267 & .512 & .304 & .474 & .527 & .507 & .583 \\
            \textbf{$\boldsymbol{V_0}$ (Ours)} & \textbf{.710} & \textbf{.895} & \textbf{.139} & \textbf{.689} & \textbf{.804} & \textbf{.138} & \textbf{.693} & \textbf{.840} & \textbf{.165} \\
            \bottomrule
        \end{tabular}%
    }
    \vspace{-10pt}
\end{table*}

\begin{figure*}[t]
    \centering
    \includegraphics[width=\textwidth]{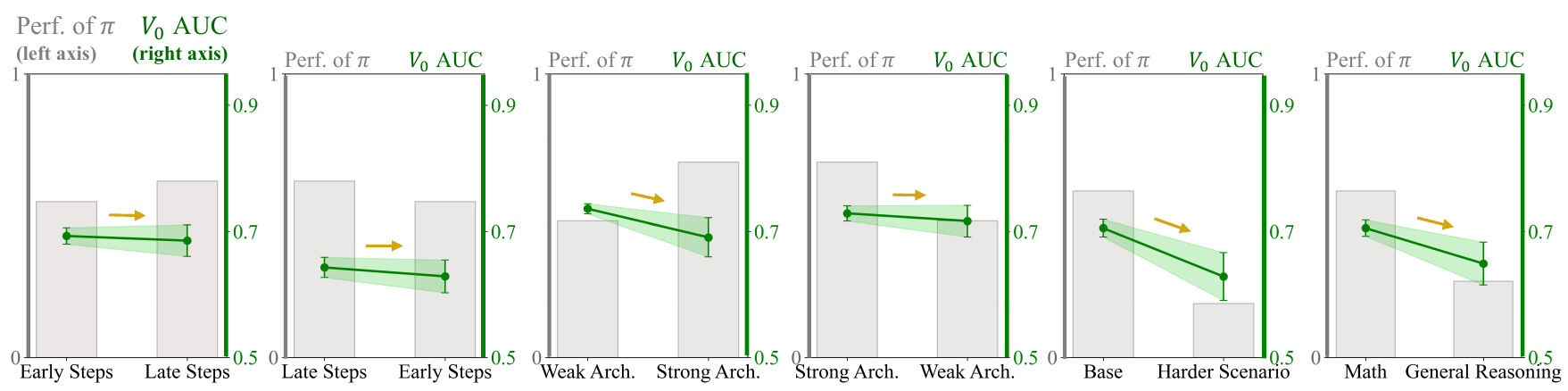}
    \caption{\textbf{Robust Generalization of $V_0$ Across Diverse Distribution Shifts}. The grey bars represent the performance of the policy $\pi$ (\textit{left axis}), while the green lines denote the AUC of $V_0$ (\textit{right axis}). $V_0$ is trained solely on the source distribution (\textit{left}) and directly transferred to the unseen distribution (\textit{right}). Despite fluctuations in policy training stages (Early Steps \textit{vs}. Late Steps), model architectures (Weak Arch. vs. Strong Arch.), or task domains (Base \textit{vs}. Harder/General), $V_0$ maintains a stable and high AUC.}
    \label{fig:vm_generalization}
\end{figure*}

\begin{figure*}[t]
    \centering
    \includegraphics[width=\textwidth]{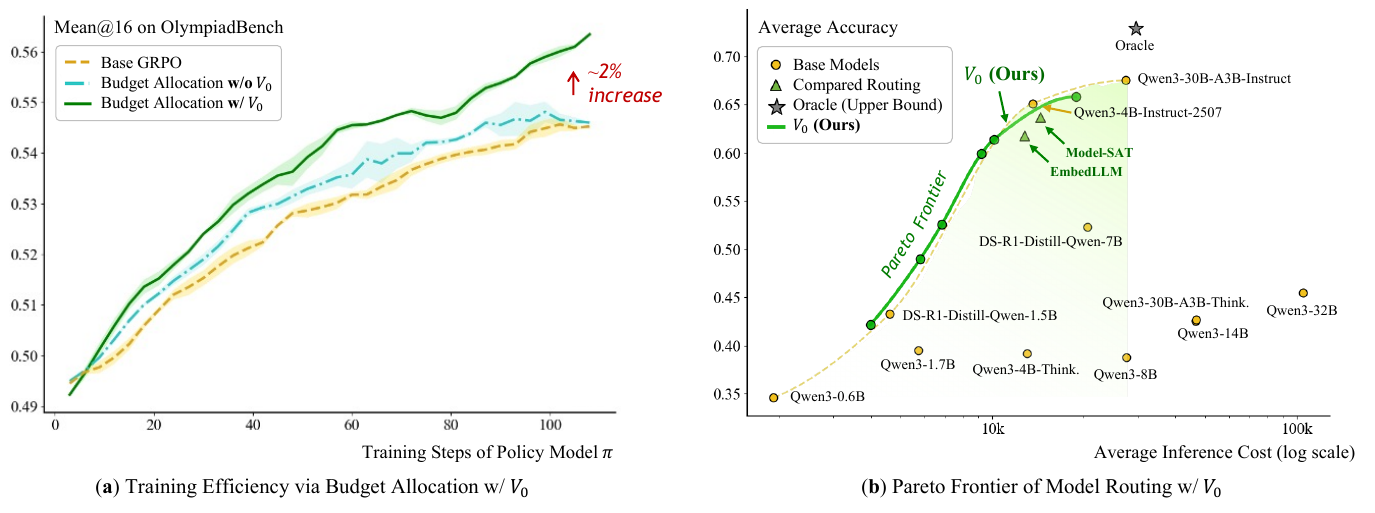}
    \caption{\textbf{Applications of $V_0$ in Resource Scheduling}. (\textbf{a}) By leveraging $V_0$ for step-wise capability estimation, Deploying $V_0$ improves upon GRPO and standard budget allocation baseline without $V_0$ (see Sec.~\ref{subsec:budget_allocation} and~\autoref{eq:utility} for more details) on OlympiadBench of Qwen3-4B-Instruct-2507. (\textbf{b}) $V_0$ establishes a Pareto frontier between average accuracy and inference cost across 12 benchmarks, outperforming competitive routing baselines such as EmbedLLM and Model-SAT. Please refer to~\autoref{app:details_of_fig_application} for more details.}
    \label{fig:application}
\end{figure*}

\subsection{Implementation Details}
$V_0$ employs a frozen {Qwen3-Embedding-0.6B} as the semantic backbone ($d_{\text{embed}}=1024$) and utilizes TabPFN-\texttt{v2.5} as the inference head. The \textit{Residual Query Adapter} is configured with 168 static queries, a projection dimension of 6, and 3 MHA heads. During our main experiments, we fine-tune the adapter while keeping the backbone and TabPFN frozen.
We train using a batch size of 2. For each training and test instance, we sample a context size of $N=256$ pairs from the context pool, and use a query batch size of 8. The objective balances the Pairwise Ranking Loss and Soft Cross-Entropy ($\alpha=0.25$), optimized via AdamW with a learning rate of $2\text{e-}4$.

\subsection{Policy Zoo and Data Construction}
Unlike standard value models trained on static prompt, $V_0$ learns from \textit{policy behaviors}. We conduct GRPO training on the {DAPO-Math-17k}~\cite{DAPO} dataset using three distinct architectures: {DeepSeek-R1-Distill-Qwen-1.5B}~\cite{DeepSeek_R1}, {Qwen3-4B-Thinking}~\cite{Qwen3}, and \textit{Qwen2.5-7B-Instruct}~\cite{qwen_2_5}.
We capture policy checkpoints every 1,024 samples along the training trajectory, resulting in approximately 247 checkpoints per architecture. For every checkpoint and query $x$, we estimate the ground truth value $r \in [0, 1]$ using \texttt{avg@10} (the average success rate over 10 stochastic rollouts). We apply a threshold of 0.5 to demarcate positive and negative samples for classification metrics, while retaining continuous values for regression tasks.
This process yields a comprehensive data pool where every step includes:
\begin{enumerate}[noitemsep,topsep=0pt,leftmargin=*]
\item \textbf{$\mathcal{D}_{\text{on-policy}}$:} The actual on-policy rollouts generated during the GRPO training process.
\item \textbf{$\mathcal{D}_{\text{held-out}}$:} Over 10k additional rollouts performed on held-out queries for each checkpoint.
\end{enumerate}
From above data pool, we design two protocols:

\textbf{Sequential Alignment (Simulating Standard RL).}
This setting mimics a real-world RLVR run where $V_0$ must track a continuously evolving policy. We align the training of $V_0$ with the policy's training timeline.
\begin{enumerate}[noitemsep,topsep=0pt,leftmargin=*]
\item \textbf{Test Set:} We reserve the fixed set $\mathcal{D}_{\text{on-policy}}$ at each training step exclusively for evaluation.
\item \textbf{Context Pool:} From the remaining $\mathcal{D}_{\text{held-out}}$, we split about half to the context pool. This pool retains the natural distribution of the policy (potentially imbalanced), reflecting the raw observation stream available during RL. We sample 256 pairs as context from this pool.
\item \textbf{Training Set:} From the remaining part of $n_{\text{extra}}$, we sample query-response pairs for training. As mentioned in the~\autoref{sec:mi_perspective}, we balance positive and negative queries. The total number varies between 200 and 800 per step depending on the policy's capability.
\end{enumerate}
Results reported in \autoref{tab:vm_results_1} and \autoref{fig:vm_stability} utilize this setting.

\textbf{Strict Generalization (Zero-Shot Transfer).}
To test if $V_0$ simply memorizes specific prompts, we enforce a strict separation based on query IDs across the entire timeline.
\begin{enumerate}[noitemsep,topsep=0pt,leftmargin=*]
\item \textbf{Test Set:} We partition the set of unique query IDs into two disjoint sets. One is reserved strictly for testing to ensure that a test query is never encountered during training, even if different steps involve different labels. There are approximately 200 queries per step corresponding to the reserved Test IDs.
\item \textbf{Context Pool:} From the samples associated with the non-test IDs, we allocate about half to the context pool (providing $\approx 1500$ candidates per step). As in the above setting, we preserve the natural distribution.
\item \textbf{Training Set:} The remaining queries from non-test IDs are used for training, balanced 1:1 (positive/negative), yielding 200--800 samples per step.
\end{enumerate}
Please refer to~\autoref{app:detailed_implementation} for more details.

\subsection{Baselines}
We benchmark $V_0$ against four baselines representing different paradigms of value estimation:
\begin{itemize}[noitemsep,topsep=0pt,leftmargin=*]
\item \textbf{Vanilla Value Model (Coupled):} Represents the standard PPO value function. We append a linear head (from embedding dim $\to$ 1) to the last token of the LLM with the same architecture as the policy. We perform a cold start (5 epochs on the base model rollouts) followed by incremental full-parameter fine-tuning (5 epochs per step) on the evolving trajectory. We use an MSE loss, batch size 32, and learning rate $1\text{e-}5$.
\item \textbf{Reward Model (Outcome-based):} We use \textit{Qwen2.5-Math-RM-72B}~\cite{qwen2_5_math} to score the prompt directly. This baseline assesses the intrinsic difficulty of prompt rather than the specific capability of the policy.
\item \textbf{$k$NN-Contextual:} A non-parametric baseline that maintains a FIFO buffer (window size 2,048) of query-performance pairs. It estimates value by averaging on the $k=64$ nearest neighbors with Euclidean distance.
\item \textbf{Step-wise Retrain:} An reference where a fresh value model is trained \textit{from scratch} at every single timestep using all available $\mathcal{D}_{\text{held-out}}$.
\end{itemize}

\subsection{Evaluation Metrics}
We employ three primary metrics to assess estimation quality:
\textbf{(1) Intra-Context AUC:} Measures the model's ability to discriminate between successful and failed queries within the \textit{same} policy checkpoint capability distribution.
\textbf{(2) Pairwise Calibration Accuracy:} We construct pairs of the \textit{same} query ID evaluated by \textit{different} checkpoints ($\pi_i, \pi_j$). The model must correctly predict $P(r_i) > P(r_j)$ if and only if the ground truth satisfies $r_i > r_j$, testing the ability to track capability evolution.
\textbf{(3) Calibration MSE:} The Mean Squared Error between the predicted probability and the ground truth \texttt{avg@10} reward.
A detailed theoretical mapping between these metrics and MI components is provided in~\autoref{app:eval_metrics_to_mi}.

\paragraph{Applications in Resource Scheduling.}
Beyond validating estimation accuracy, we evaluate the utility of $V_0$ in two critical resource scheduling scenarios: dynamic budget allocation during training and model routing during inference.

\subsection{Budget Allocation for Data Efficiency}
\label{subsec:budget_allocation}

\paragraph{Setting and Objective.}
In Value-Free methods like GRPO, the baseline is estimated via group rollouts. Standard approaches typically assign a fixed budget of rollouts to every prompt. This strategy is suboptimal: for trivial prompts, the model yields rollouts that are entirely correct, while for distinctively hard ones, it yields rollouts that are entirely incorrect. In both extremes, the advantage collapses to zero, rendering these rollouts ineffective.

We formulate budget allocation as a constrained optimization problem maximizing the \textit{Exploration Utility}:
\begin{equation}
\max_{\{B_i\}} \sum_{i} \text{Utility}(B_i, p_i) \quad \text{s.t.} \quad \sum_{i} B_i \le B_{\text{total}}
\end{equation}
where $B_i$ is the number of budget allocated to the $i$-th prompt, $p_i$ is the success rate, and $B_{\text{total}}$ is the global compute budget. The primary challenge is determining $p_i$. Existing heuristics rely on success rates from the previous epoch, but frequent policy updates make these historical metrics prone to latency. In contrast, $V_0$ predicts the current policy's success probability $p_i = P(r=1 \mid x_i, \mathcal{C}_{\text{prev-step}})$ by utilizing the rollouts from the previous step as context. This provides an estimate based on the current capabilities. Crucially, this allows $V_0$ to allocate budget even for training samples with no prior history.

\paragraph{Utility Function and Solution.}
Next, we define a utility function to quantify the training value of a prompt. Both our method and Knapsack RL \citep{knapsack_rl} aim to maximize an expected learning signal. Following Knapsack RL, which approximates this value as the product of the probability of obtaining a non-zero gradient and an estimated information gain, we base our utility on the \textit{Expected Gradient Signal Strength}, rigorously defined as the mathematical expectation of the sum of absolute advantages within a single update step.
This metric naturally accounts for how the ratio of positive to negative prompts influences the gradient. By computing this expectation over the binomial distribution of rollout outcomes and mathematically subtracting the zero-variance edge cases (\textit{i.e.}, all correct or all wrong), we derive the following closed-form approximation (please see \autoref{app:utility_derivation} for the full derivation):
\begin{equation}
\label{eq:utility}
\text{Utility}(B_i, \, p_i) = B_i (1-p_i) \left[ 1 - (1-p_i)^{B_i-1} \right]
\end{equation}
We solve this using a greedy algorithm, iteratively allocating the budget to the samples that yield the highest marginal utility.

\begin{table*}[t]
    \centering
    \caption{\textbf{Ablation Studies on $\boldsymbol{V_0}$ Architecture, Training Objectives, and Tuning Strategies.} We compare different connector designs, loss combinations, and tuning parts. The \textit{Overfit Point} indicates the training step where validation performance peaks.}
    \label{tab:vm_ablation_studies}
    
    \begin{minipage}[b]{0.42\textwidth}
        \centering
        \caption{\small{Impact of Connector Architecture}}
        \label{tab:ablation_arch}
        \resizebox{\linewidth}{!}{
            \begin{tabular}{lcccc}
                \toprule
                \textbf{Connector Type} & \textbf{\makecell{Intra\\AUC}} & \textbf{\makecell{Pair.\\Acc.}} & \textbf{Complex.} & \textbf{\makecell{Overfit\\Point}} \\
                \midrule
                Only \texttt{last\_token} & .621 & .779 & Fast & - \\
                MLP & .618 & .835 & Heavy & 4 \\
                Cascaded & .585 & .837 & Heavy & 9 \\
                MultiScale & .589 & \textbf{.861} & Ex. Heavy & 5 \\
                Fixed Query & .674 & .782 & Moderate & 25 \\
                Pyramidal Fixed & .653 & .805 & Heavy & 22 \\
                \textbf{Residual Dynamic Query} & \textbf{.705} & .839 & Moderate & {39} \\
                \bottomrule
            \end{tabular}
        }
    \end{minipage}
    \hfill
    \begin{minipage}[b]{0.26\textwidth}
        \centering
        \caption{\small{On Loss Functions}}
        \label{tab:ablation_loss}
        \resizebox{\linewidth}{!}{
            \begin{tabular}{lcc}
                \toprule
                \textbf{Loss Type} & \textbf{\makecell{Intra\\AUC}} & \textbf{\makecell{Pair.\\Acc.}} \\
                \midrule
                Only Soft CE & .686 & .783 \\
                Pairwise (Intra) & .578 & .611 \\
                Pairwise (Intra + Inter) & .621 & \textbf{.848} \\
                \midrule
                \textbf{Pairwise (Intra) + Soft CE} & \textbf{.705} & .839 \\
                \bottomrule
            \end{tabular}
        }
    \end{minipage}
    \hfill
    \begin{minipage}[b]{0.30\textwidth}
        \centering
        \caption{\small{On Tuning Modules}}
        \label{tab:ablation_tuning}
        \resizebox{\linewidth}{!}{
            \begin{tabular}{clcc}
                \toprule
                \textbf{Configuration} & \textbf{TabPFN Head} & \textbf{\makecell{Intra\\AUC}} & \textbf{\makecell{Pair.\\Acc.}} \\
                \midrule
                \multirow{2}{*}{\makecell{w/o Connector\\(\texttt{last\_token})}} 
                    & Freeze & .621 & .779 \\
                    & Tune & .648 & .813 \\
                \cmidrule(lr){1-4}
                \multirow{2}{*}{\textbf{Tune Connector}} 
                    & \textbf{Freeze} & \textbf{.705} & \textbf{.839} \\
                    & Tune & .594 & .822 \\
                \bottomrule
            \end{tabular}
        }
    \end{minipage}
    \vspace{-10pt}
\end{table*}

\subsection{Inference Routing for Cost-Performance Trade-off}

\paragraph{Setting and Objective.}
We further evaluate $V_0$ during the deployment for inference routing. With the rise of inference-time scaling, real-world deployments often maintain a heterogeneous \textit{Model Fleet} $\Pi$, ranging from lightweight, low-latency models to flagship, high-capability ones. Traditional ``one-size-fits-all'' strategies, which route every query to the strongest one, fail to balance expenses with performance.
Similar to the training setting, we treat each candidate model in fleet as an independent policy $\pi$ and construct a capability context $\mathcal{C}$. Inspired by Avengers-Pro~\cite{avengers_pro}, we incorporate both performance $r \in \{0, 1\}$ and normalized cost $\tilde{c} \in [0, 1]$ (derived from model size and token usage) into the context. We define a cost-weighted label as: $r^{\beta} = \beta r + (1-\beta)(1-\tilde{c})$.
Consequently, the context is dynamically constructed as $\mathcal{C}_{\pi}^{\beta} = \{(x_j, \text{Score}_{\beta, j})\}_{j=1}^{N}$ based on the cost trade-off preference $\beta$. For instance, a lower $\beta$ prioritizes cost reduction, assigning higher preference to weaker but cheaper models within the context. The routing decision is then formalized as:
\begin{equation}
\pi^* = \operatorname*{argmax}_{\pi \in \Pi} \quad V_0 \left( x, \mathcal{C}_{\pi}^{\beta} \right)
\end{equation}
We augment our training data with the Open-Reasoner-Zero 57k dataset~\cite{open_reasoner_zero}. By training three architectures under the same protocol, we harvest over 200k interaction samples, integrating them to heighten the sensitivity to routing dynamics. We also maintain the strict separation that all context samples are drawn from this pre-inferred pool and are disjoint from evaluation queries.
The primary advantage of $V_0$ is the \textit{zero-shot generalization}: it adapts to new models added to the fleet or changes in pricing strategies solely by updating the context, without requiring any parameter updates. This allows $V_0$ to flexibly navigate the Pareto frontier between performance and cost.

\subsection{Main Results: Stability and Tracking Efficiency}
We first evaluate the ability of $V_0$ to track an evolving policy during GRPO training.
As shown in \autoref{tab:vm_results_1} and \autoref{fig:vm_stability}, $V_0$ consistently outperforms coupled baselines across all architectures. Notably, it matches the computationally expensive \textit{Step-wise Retrain} method, proving that in-context capability recognition is a viable alternative to frequent retraining.
Furthermore, while the Vanilla VM exhibits lag and instability due to the coupling dilemma, $V_0$ maintains high accuracy (Intra-AUC $> 0.85$) from the very first step, effectively tracking policy updates without gradient adaptation.

\subsection{Robust Generalization across Distribution Shifts}
We examine whether $V_0$ captures generalizable meta-knowledge or merely memorizes prompts.
(\textbf{1}) \textbf{Zero-Shot Transfer}: In \autoref{tab:vm_results_2}, where test query IDs are strictly excluded from training history, the Vanilla VM collapses to near-random guessing (AUC .560). In contrast, $V_0$ retains robust predictive power (AUC .710), demonstrating its ability to infer capabilities on unseen samples.
(\textbf{2}) \textbf{Multi-Dimensional Robustness}: In \autoref{fig:vm_generalization}, whether facing temporal shifts (Early \textit{vs}. Late Steps), architectural changes (Weak \textit{vs}. Strong Arch.), or domain variations (Base/Math (DAPO-Math) to Harder (AIME-24 \& AIME-25) or General (GPQA-Diamond)), $V_0$ maintains stable AUCs.

\subsection{Applications in Resource Scheduling}
We demonstrate the practical utility of $V_0$ in two scenarios (\autoref{fig:application}):
\begin{itemize}[noitemsep,topsep=0pt,leftmargin=*]
    \item \textbf{Dynamic Budget Allocation:} By using $V_0$ to estimate sample difficulty in real-time, we optimize budget allocation during training, accelerating convergence and improving performance on OlympiadBench by $\sim2\%$ compared to the allocation baselines w/o using $V_0$.
    \item \textbf{Inference Routing:} $V_0$ establishes a Pareto frontier between accuracy and cost. It enables cost-efficient routing strategies that outperform methods like EmbedLLM~\cite{EmbedLLM}, Model-SAT~\cite{Model-SAT}, allowing for the deployment of smaller models without significant accuracy loss.
\end{itemize}

\subsection{Ablation Studies}
\begin{enumerate}[noitemsep,topsep=0pt,leftmargin=*]
    \item \textbf{Architecture} (\autoref{tab:ablation_arch}): The \textit{Residual Dynamic Query} connector achieves the best performance (AUC .705), offering better generalization than other variants that may be prone to overfitting.
    \item \textbf{Loss Function} (\autoref{tab:ablation_loss}): Relying solely on $\mathcal{L}_{\text{rank}}$ leads to overfitting (AUC .578), whereas using only $\mathcal{L}_{\text{CE}}$ results in poor separability despite achieving good calibration. The design of $V_0$ balances these objectives, ensuring discrimination while maintaining calibration.
    \item \textbf{Tuning Strategy} (\autoref{tab:ablation_tuning}): Jointly fine-tuning the TabPFN inference head increases overfitting risk. Freezing the pre-trained TabPFN head and tuning only the \textit{Connector} yields optimal results.
\end{enumerate}

\section{Conclusion}
In this paper, we introduce $V_0$, a foundational Generalist Value Model resolving the inherent value model coupling dilemma by reframing value estimation as context-conditional prediction. Synthesizing high-dimensional semantic perception with structured probabilistic reasoning, $V_0$ achieves robust zero-gradient adaptation, enabling accurate policy tracking and dynamic resource scheduling optimization without synchronous training instability. $V_0$ establishes the first pre-training paradigm for capability recognition, showing that model potential is inferable from historical behavior rather than just parameters. While currently focusing on coarse-grained value estimation at state zero ($s_0$), future work will extend this mechanism to the token-level for fine-grained process supervision.

\section*{Acknowledgments}
The authors would like to thank Yi-Chen Li for the valuable discussions during the initial stages of idea formulation. We are also grateful to Si-Yang Liu for the discussions regarding TabPFN.

\section*{Impact Statement}
This paper presents work whose goal is to advance the field of Machine Learning. There are many potential societal consequences of our work, none which we feel must be specifically highlighted here.

\bibliography{example_paper}
\bibliographystyle{icml2026}

\newpage
\appendix
\onecolumn

\section*{Appendix}

\section{Theoretical Analysis of Shortcut Learning and Ranking Invariance (\autoref{sec:method})}
\label{app:theory_proof}

In this section, we provide the mathematical derivations for the claims made in the main paper regarding the shortcut learning phenomenon and the effectiveness of the pairwise ranking objective.

Let $X \in \mathcal{X}$ denote the input query (prompt), and $Y \in \{0, 1\}$ denote the binary outcome (reward), where $1$ represents success. Let $\mathcal{C} \in \mathfrak{C}$ represent the context, defined as the set of historical query-performance pairs for a specific policy. We denote the Shannon entropy of a random variable $Z$ as $H(Z)$ and the conditional entropy as $H(Z \mid W)$. The binary entropy function is denoted as $\mathcal{H}_b(p) = -p \log p - (1-p) \log (1-p)$.

\subsection{Information Capacity Bound}

\begin{proposition}
\label{prop:balance}
For any predictive model attempting to infer $Y$ from features $(X, \mathcal{C})$, the mutual information $I(Y; X, \mathcal{C})$ is bounded by the marginal entropy of the labels $H(Y)$. This upper bound is maximized if and only if the global label distribution is balanced, \textit{i.e.}, $P(Y=1) = 0.5$.
\end{proposition}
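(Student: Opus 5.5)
The proof has two parts: an upper bound on the mutual information, and the identification of when that bound is largest.

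For the first part, I would expand the mutual information as
\[
I(Y; X, \mathcal{C}) = H(Y) - H(Y \mid X, \mathcal{C}).
\]
Since $Y$ is discrete (binary), its conditional entropy is nonnegative, $H(Y \mid X, \mathcal{C}) \ge 0$. This gives $I(Y; X, \mathcal{C}) \le H(Y)$ directly. Equality holds exactly when $Y$ is a deterministic function of $(X, \mathcal{C})$ almost surely.

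I would also note that the same ceiling applies to any learned predictor. If a model outputs a representation $g_\theta(X, \mathcal{C})$, then $Y \to (X,\mathcal{C}) \to g_\theta(X,\mathcal{C})$ is a Markov chain. The data processing inequality then gives $I(Y; g_\theta(X,\mathcal{C})) \le I(Y; X,\mathcal{C}) \le H(Y)$. This justifies the phrase ``for any predictive model'' in the statement.

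For the second part, I would write $p = P(Y=1)$, so that $H(Y) = \mathcal{H}_b(p)$. The task reduces to maximizing $\mathcal{H}_b$ on $[0,1]$. Differentiating gives $\mathcal{H}_b'(p) = \log\frac{1-p}{p}$, which is zero only at $p = 1/2$. The second derivative is $\mathcal{H}_b''(p) = -\frac{1}{p(1-p)} < 0$ on $(0,1)$, so $\mathcal{H}_b$ is strictly concave. Hence $p = 1/2$ is the unique maximizer, with value $\log 2$ (one bit). The endpoints give $\mathcal{H}_b(0) = \mathcal{H}_b(1) = 0$.

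Strict concavity is what delivers the ``if and only if.'' As an alternative, Gibbs' inequality gives $\mathcal{H}_b(p) \le \log 2$, with equality iff the distribution is uniform.

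The main obstacle is interpretive rather than technical. The statement is really about the bound $H(Y)$ being maximized, not about $I(Y; X, \mathcal{C})$ itself attaining its maximum. Balancing $P(Y=1)$ raises the ceiling but does not by itself guarantee that $I(Y;X,\mathcal{C})$ reaches it. I would state this explicitly: the proposition concerns the capacity $\sup I \le H(Y) = \mathcal{H}_b(p)$. The ``iff'' applies to the maximization of $\mathcal{H}_b(p)$ over the label marginal $p$, with the conditional structure of $Y$ given $(X, \mathcal{C})$ left free.

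This reading is also what motivates the paper's label balancing, and it connects to \autoref{thm:shortcut}. A balanced marginal can still coexist with a non-uniform $\mu(\mathcal{C})$.
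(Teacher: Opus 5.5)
Your proof is correct and follows the paper's own argument: you bound $I(Y;X,\mathcal{C}) = H(Y) - H(Y\mid X,\mathcal{C})$ by $H(Y)$ using nonnegativity of conditional entropy, then get the ``iff'' from strict concavity of $\mathcal{H}_b$ (second derivative $-1/(p(1-p)) < 0$) and its unique critical point at $p = 0.5$. Your additions are sound refinements of points the paper leaves implicit: the data processing step justifying ``any predictive model'', the equality condition, and the remark that the ``iff'' concerns the ceiling $H(Y)$ rather than $I$ itself.
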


\begin{proof}
By the definition of Mutual Information:
\begin{equation}
    I(Y; X, \mathcal{C}) = H(Y) - H(Y \mid X, \mathcal{C})
\end{equation}
Since entropy is non-negative, $H(Y \mid X, \mathcal{C}) \ge 0$, leading to the natural upper bound $I(Y; X, \mathcal{C}) \le H(Y)$.

Let $p = P(Y=1)$ be the global success rate. The marginal entropy is given by the binary entropy function $f(p) = \mathcal{H}_b(p)$. The first and second derivatives with respect to $p$ are:
\begin{align*}
    f'(p) &= \log(1-p) - \log(p) \\
    f''(p) &= -\frac{1}{p(1-p)}
\end{align*}
Since $f''(p) < 0$ for all $p \in (0, 1)$, $\mathcal{H}_b(p)$ is strictly concave. Setting $f'(p) = 0$ yields $p = 0.5$. Thus, the information capacity is maximized uniquely at the balanced distribution.
\end{proof}

\subsection{Mutual Information Decomposition}

\begin{lemma}
\label{lemma:decomp}
The total information available for predicting $Y$ decomposes into a context-dependent prior term and a query-dependent interaction term:
\begin{equation}
    I(Y; X, \mathcal{C}) = I(Y; \mathcal{C}) + I(Y; X \mid \mathcal{C})
\end{equation}
\end{lemma}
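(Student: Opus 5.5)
This is the chain rule for mutual information. The plan is to expand every term by entropies and use telescoping. We work with the joint distribution of $(Y, X, \mathcal{C})$ and treat $X$ and $\mathcal{C}$ as discrete, or as having densities. The relevant quantities are all well defined because $Y$ is binary, so every entropy of $Y$ (conditional or not) is finite and bounded by $\log 2$.

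\textbf{Step 1.} Write each mutual information term as a difference of entropies of $Y$:
\begin{align*}
I(Y; X, \mathcal{C}) &= H(Y) - H(Y \mid X, \mathcal{C}),\\
I(Y; \mathcal{C}) &= H(Y) - H(Y \mid \mathcal{C}),\\
I(Y; X \mid \mathcal{C}) &= H(Y \mid \mathcal{C}) - H(Y \mid X, \mathcal{C}).
\end{align*}
The third identity is the definition of conditional mutual information, namely $\mathbb{E}_{\mathcal{C}}\bigl[I(Y;X \mid \mathcal{C}=c)\bigr]$ with the conditional entropies averaged over $\mathcal{C}$.

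\textbf{Step 2.} Add the second and third lines. The $H(Y \mid \mathcal{C})$ terms cancel, and what remains is $H(Y) - H(Y \mid X, \mathcal{C})$, which equals the first line. This proves the lemma.

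\textbf{Possible obstacle.} The only point needing care is measure-theoretic. Conditioning on a set-valued context $\mathcal{C}$ could involve continuous embeddings, where differential entropies of $X$ or $\mathcal{C}$ might be ill-behaved. We sidestep this because only entropies of the discrete variable $Y$ appear, and these are finite. Alternatively, we can write everything as expectations of log-likelihood ratios,
\[
I(Y;X,\mathcal{C}) = \mathbb{E}\!\left[\log \frac{P(Y\mid X,\mathcal{C})}{P(Y)}\right],
\]
and split the ratio as
\[
\frac{P(Y\mid X,\mathcal{C})}{P(Y)} = \frac{P(Y\mid \mathcal{C})}{P(Y)} \cdot \frac{P(Y\mid X,\mathcal{C})}{P(Y\mid \mathcal{C})}.
\]
Linearity of expectation then gives the two terms directly. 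Each expectation is finite because the $Y$-entropies are bounded. This second route works for general $X$ and $\mathcal{C}$.
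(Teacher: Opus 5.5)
Your proof is correct and is essentially the paper's argument: it also applies the chain rule for mutual information by writing the right-hand side as $(H(Y)-H(Y\mid\mathcal{C}))+(H(Y\mid\mathcal{C})-H(Y\mid X,\mathcal{C}))$ and telescoping. Your finiteness remark (all $Y$-entropies are bounded by $\log 2$) and the log-likelihood-ratio version for general $X,\mathcal{C}$ add rigor the paper omits, though the identity does not depend on them.
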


\begin{proof}
We apply the Chain Rule for Mutual Information. For random variables $A, B, Z$, $I(A; B, Z) = I(A; Z) + I(A; B \mid Z)$. By setting $A=Y$, $B=X$, and $Z=\mathcal{C}$, we obtain:
\begin{equation}
    I(Y; X, \mathcal{C}) = \underbrace{(H(Y) - H(Y \mid \mathcal{C}))}_{\text{Context Shortcut}} + \underbrace{(H(Y \mid \mathcal{C}) - H(Y \mid X, \mathcal{C}))}_{\text{Instance Reasoning}}
\end{equation}
\end{proof}

\subsection{Shortcut Learning Existence}

Here we prove that if policies have different capability levels (variance in performance), a shortcut exists. A model can reduce loss simply by memorizing the capability of the context $\mathcal{C}$, without looking at the query $X$.

\begin{tcolorbox}[
    colback=gray!3,
    colframe=black,
    boxrule=0.5pt,
    arc=2pt,
    left=6pt,
    right=6pt,
    top=6pt,
    bottom=6pt
]
\begin{theorem}
\label{thm:shortcut_proof}
Let $\mu(\mathcal{C}) \triangleq P(Y=1 \mid \mathcal{C})$ denote the latent capability prior of context $\mathcal{C}$.
If $\operatorname{Var}[\mu(\mathcal{C})] > 0$, then $I(Y; \, \mathcal{C}) > 0$, and we have:
\[
\min_{\theta} \, \mathcal{L}_{\text{CE}}(P(Y \mid \mathcal{C})) < H(Y).
\]
Thus, a model minimizing $\mathcal{L}_{\text{CE}}$ can strictly reduce error by fitting prior $\mu(\mathcal{C})$ alone, independent of the input $X$.
\end{theorem}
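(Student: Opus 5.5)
The plan has two parts. First we show that a non-degenerate capability prior forces $I(Y;\mathcal{C})>0$. Then we exhibit a model that reads only $\mathcal{C}$ and has population cross-entropy $H(Y\mid\mathcal{C})$, which is strictly below $H(Y)$.

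\textbf{Step 1 (mutual information via binary entropies).} Write $H(Y\mid\mathcal{C}) = \mathbb{E}_{\mathcal{C}}[\mathcal{H}_b(\mu(\mathcal{C}))]$. By the tower property, $P(Y=1)=\mathbb{E}[\mu(\mathcal{C})]$, so $H(Y)=\mathcal{H}_b(\mathbb{E}[\mu(\mathcal{C})])$. Hence
\[
I(Y;\mathcal{C}) = \mathcal{H}_b\bigl(\mathbb{E}[\mu(\mathcal{C})]\bigr) - \mathbb{E}\bigl[\mathcal{H}_b(\mu(\mathcal{C}))\bigr].
\]
In the proof of Proposition~\ref{prop:balance} we showed $\mathcal{H}_b''(p)=-1/(p(1-p))<0$, so $\mathcal{H}_b$ is strictly concave on $[0,1]$. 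Jensen's inequality therefore gives $I(Y;\mathcal{C})\ge 0$, with equality iff $\mu(\mathcal{C})$ is almost surely constant. The hypothesis $\operatorname{Var}[\mu(\mathcal{C})]>0$ rules out the constant case, so the inequality is strict.

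\textbf{Step 2 (the context-only model attains the bound).} Interpret $\mathcal{L}_{\text{CE}}(P(Y\mid\mathcal{C}))$ as the population cross-entropy $\mathbb{E}[-\log q_\theta(Y\mid\mathcal{C})]$, minimized over models $q_\theta$ that see only $\mathcal{C}$. Pointwise in $\mathcal{C}$, the Gibbs inequality (nonnegativity of KL) gives
\[
\mathbb{E}[-\log q(Y\mid\mathcal{C})\mid\mathcal{C}] = \mathcal{H}_b(\mu(\mathcal{C})) + \mathrm{KL}\bigl(\mathrm{Bern}(\mu(\mathcal{C}))\,\|\,q(\cdot\mid\mathcal{C})\bigr).
\]
So the minimum is $H(Y\mid\mathcal{C})$, achieved at $q=\mu$. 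Combining with Step 1, $\min_\theta\mathcal{L}_{\text{CE}} = H(Y\mid\mathcal{C}) = H(Y)-I(Y;\mathcal{C}) < H(Y)$.

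For comparison, the best constant predictor, which ignores both $X$ and $\mathcal{C}$, attains exactly $H(Y)$. The strict gap is therefore achieved by a function of $\mathcal{C}$ alone, independent of $X$, which is the shortcut claim. Lemma~\ref{lemma:decomp} then identifies this gain as the first term of the decomposition.

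\textbf{Main obstacle: realizability.} The minimum over $\theta$ equals $H(Y\mid\mathcal{C})$ only if the model class can represent $\mu(\mathcal{C})$. We would state the result under a realizability assumption, and note that a weaker assumption suffices: the strict inequality only needs some realizable $q_\theta(\cdot\mid\mathcal{C})$ that beats the constant predictor. One such choice is a small perturbation of the constant $\mathbb{E}[\mu(\mathcal{C})]$ in the direction of $\mu(\mathcal{C})-\mathbb{E}[\mu(\mathcal{C})]$. A first-order expansion of the loss shows this strictly lowers the loss whenever $\operatorname{Var}[\mu(\mathcal{C})]>0$. A second, minor point is strict concavity at the endpoints $p\in\{0,1\}$. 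We would handle it by continuity of $\mathcal{H}_b$ on $[0,1]$: strict concavity still holds on the closed interval even though $\mathcal{H}_b''$ is undefined at $0$ and $1$.
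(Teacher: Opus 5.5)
Your proposal is correct and follows the paper's route: strict concavity of $\mathcal{H}_b$ plus Jensen gives $H(Y\mid\mathcal{C}) = \mathbb{E}[\mathcal{H}_b(\mu(\mathcal{C}))] < H(Y)$, and the shortcut predictor $q=\mu(\mathcal{C})$ then attains cross-entropy $H(Y\mid\mathcal{C})$. Your version is somewhat tighter than the paper's in two ways. Using the tower property $P(Y=1)=\mathbb{E}[\mu(\mathcal{C})]$ removes the paper's extra assumption $\mathbb{E}[\mu(\mathcal{C})]=0.5$, which does not appear in the theorem statement; and your Gibbs step and realizability remark make explicit what the paper leaves implicit about $\min_\theta$.
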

\end{tcolorbox}

\begin{proof}
The conditional entropy of $Y$ given $\mathcal{C}$ is the expectation of the entropy of the conditional probabilities:
\begin{equation}
    H(Y \mid \mathcal{C}) = \mathbb{E}_{\mathcal{C}} \left[ \mathcal{H}_b(\mu(\mathcal{C})) \right]
\end{equation}
We are given that $\mathbb{E}_{\mathcal{C}}[\mu(\mathcal{C})] = 0.5$ (global balance) and $\operatorname{Var}[\mu(\mathcal{C})] > 0$.
Since $\mathcal{H}_b(p)$ is strictly concave, we apply Jensen's Inequality. For a strictly concave function $f$ and a non-constant random variable $Z$:
\begin{equation}
    \mathbb{E}[f(Z)] < f(\mathbb{E}[Z])
\end{equation}
Substituting our terms:
\begin{equation}
    \mathbb{E}_{\mathcal{C}} [\mathcal{H}_b(\mu(\mathcal{C}))] < \mathcal{H}_b(\mathbb{E}_{\mathcal{C}}[\mu(\mathcal{C})]) = \mathcal{H}_b(0.5) = H(Y)
\end{equation}
Thus, $H(Y \mid \mathcal{C}) < H(Y)$. By definition, $I(Y; \mathcal{C}) = H(Y) - H(Y \mid \mathcal{C}) > 0$.

\noindent \textbf{Implication for Optimization:}
Consider the Cross-Entropy (CE) loss $\mathcal{L}_{\text{CE}}$.
\begin{itemize}
    \item A \textit{Marginal Baseline} model predicting $\hat{y} = 0.5$ achieves $\mathcal{L} = H(Y) = 1$ bit.
    \item A \textit{Shortcut} model predicting $\hat{y} = \mu(\mathcal{C})$ achieves $\mathcal{L} = H(Y \mid \mathcal{C}) < 1$ bit.
\end{itemize}
Since $I(Y; \mathcal{C})$ represents the \textit{easy} statistical correlation (low frequency, low complexity mapping $C \to [0,1]$) compared to the complex high-dimensional interaction $I(Y; X \mid \mathcal{C})$, gradient descent algorithms will prioritize minimizing the error component associated with $I(Y; \mathcal{C})$, leading to the shortcut solution.
\end{proof}

\subsection{Ranking Loss Invariance}

To mitigate the shortcut described in~\autoref{thm:shortcut_proof}, we utilize a pairwise ranking loss. A key advantage of this objective is its mathematical invariance to any additive bias derived solely from the context. We distinguish between the \textit{logit score} $s(x, \mathcal{C}; \phi) \in \mathbb{R}$ and the final \textit{value probability} $V_0(x, \mathcal{C}; \phi) = \sigma(s(x, \mathcal{C}; \phi)) \in [0, 1]$. The ranking optimization is performed directly on the logit space $\Delta s$.

\begin{tcolorbox}[colback=gray!3, colframe=black, boxrule=0.5pt, arc=2pt, left=4pt, right=4pt, top=4pt, bottom=4pt]
\begin{theorem}
\label{thm:invariance_proof}
Let $\tilde{s}(x, \mathcal{C}) = s(x, \mathcal{C}) + b(\mathcal{C})$ be a scoring function perturbed by an arbitrary context-dependent bias $b(\mathcal{C})$. The gradient of the ranking loss with respect to parameters $\phi$ of $V_0$ satisfies:
\[
\nabla_{\phi} \mathcal{L}_{\text{rank}}(\tilde{s}) = \nabla_{\phi} \mathcal{L}_{\text{rank}}(s).
\]
\end{theorem}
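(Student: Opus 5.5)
The identity follows from the fact that the ranking loss sees the scores only through within-context differences. Write the loss pointwise: for a fixed context $\mathcal{C}$ and a training pair $(x_i,x_j)$ drawn from that same context with $y_i \succ y_j$, the per-pair term is $\ell(\Delta) = -\log\sigma(\Delta)$ with $\Delta = s(x_i,\mathcal{C};\phi) - s(x_j,\mathcal{C};\phi)$. Since $\mathcal{L}_{\text{rank}}$ is an expectation over $\mathcal{C}$ and over such pairs of $\ell(\Delta)$, it suffices to show the per-pair term is unchanged under the perturbation. The gradient claim then follows by linearity of the expectation and of $\nabla_\phi$.

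\textbf{Step 1 (cancellation).} Substitute $\tilde{s}$ for $s$. Both queries share the same context $\mathcal{C}$, because pairs are constructed within a single context. Hence
\[
\tilde{\Delta} = \bigl(s(x_i,\mathcal{C}) + b(\mathcal{C})\bigr) - \bigl(s(x_j,\mathcal{C}) + b(\mathcal{C})\bigr) = \Delta .
\]
So $\ell(\tilde\Delta)=\ell(\Delta)$ for every admissible pair, and therefore $\mathcal{L}_{\text{rank}}(\tilde s)=\mathcal{L}_{\text{rank}}(s)$ as functions of $\phi$.

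\textbf{Step 2 (gradients).} Because the two losses coincide as functions of $\phi$, their gradients with respect to $\phi$ coincide. To make this explicit, use the chain rule:
\[
\nabla_\phi \ell(\tilde\Delta) = -\bigl(1-\sigma(\tilde\Delta)\bigr)\bigl(\nabla_\phi s(x_i,\mathcal{C}) + \nabla_\phi b(\mathcal{C}) - \nabla_\phi s(x_j,\mathcal{C}) - \nabla_\phi b(\mathcal{C})\bigr).
\]
The $\nabla_\phi b(\mathcal{C})$ terms cancel, and $\sigma(\tilde\Delta)=\sigma(\Delta)$ by Step 1, so this equals $\nabla_\phi \ell(\Delta)$. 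This holds whether $b$ depends on $\phi$ or not, provided $b$ is differentiable in $\phi$ whenever it depends on it. Finally, exchange gradient and expectation; this is routine, since $\ell$ is smooth with derivative bounded by $1$ in absolute value, so under mild integrability of $\nabla_\phi s$ dominated convergence applies.

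\textbf{Main obstacle.} The only genuine point is making the hypothesis explicit: the bias must be identical for both elements of a pair. This is exactly why pairs are drawn from the same $\mathcal{C}$. For inter-context pairs, such as the ``Intra + Inter'' variant in \autoref{tab:ablation_loss}, the cancellation fails. I would state this restriction clearly and note that $b(\mathcal{C})$ may be any function of $\mathcal{C}$ alone, including one that depends on $\phi$.
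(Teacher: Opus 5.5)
Your proof is correct and follows essentially the same route as the paper: both cancel $b(\mathcal{C})$ in the within-context logit difference $\Delta s_{ij}$, then read off the per-pair gradient $(\sigma(\Delta s_{ij})-1)\,\nabla_\phi \Delta s_{ij}$, which no longer involves $b$. Your additional remarks are accurate refinements rather than a different argument: $b$ may even depend on $\phi$, equality of the losses as functions of $\phi$ already forces equal gradients, and the cancellation fails for inter-context pairs.
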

\end{tcolorbox}

\begin{proof}
Consider a pair of samples $(x_i, x_j)$ drawn from the \textit{same} context $\mathcal{C}$ with opposing labels. The Bradley-Terry probability of $x_i$ being preferred over $x_j$ is modeled using the difference in their logit scores, denoted as $\Delta s_{ij}$:
\begin{equation}
    P(x_i \succ x_j \mid \mathcal{C}) = \sigma(\Delta s_{ij}) = \frac{1}{1 + e^{-(s(x_i, \mathcal{C}) - s(x_j, \mathcal{C}))}}
\end{equation}
When the scoring function is perturbed by a context-specific bias $b(\mathcal{C})$, the perturbed logit difference $\Delta \tilde{s}_{ij}$ becomes:
\begin{align}
    \Delta \tilde{s}_{ij} &= \tilde{s}(x_i, \mathcal{C}) - \tilde{s}(x_j, \mathcal{C}) \\
    &= [s(x_i, \mathcal{C}; \phi) + b(\mathcal{C})] - [s(x_j, \mathcal{C}; \phi) + b(\mathcal{C})] \\
    &= s(x_i, \mathcal{C}; \phi) - s(x_j, \mathcal{C}; \phi) \\
    &= \Delta s_{ij}
\end{align}
The bias term $b(\mathcal{C})$ is eliminated algebraically during the subtraction. The ranking loss is defined as $\mathcal{L}_{\text{rank}} = - \log \sigma(\Delta s_{ij})$. The gradient with respect to the model parameters $\phi$ is:
\begin{equation}
    \nabla_\phi \mathcal{L}_{\text{rank}} = (\sigma(\Delta s_{ij}) - 1) \cdot \nabla_\phi (s(x_i, \mathcal{C}; \phi) - s(x_j, \mathcal{C}; \phi))
\end{equation}
Since $\Delta s_{ij}$ is independent of $b(\mathcal{C})$, both the scalar loss value and the gradient vector remain unaffected by shifts in the context prior. Consequently, minimizing $\mathcal{L}_{\text{rank}}$ forces the model to extract features that discriminate $x_i$ from $x_j$ \textit{within} the context, effectively maximizing the conditional mutual information $I(Y; X \mid \mathcal{C})$.
\end{proof}

\section{Empirical Analysis Framework: Residual Orthogonality}
\label{app:residual_orthogonality}

In \autoref{sec:mi_perspective}, we theoretically decomposed the mutual information into a \textit{shortcut term} $I(Y;\mathcal{C})$ and a \textit{reasoning term} $I(Y;X\mid\mathcal{C})$. A critical challenge in training generalist value models is distinguishing whether the model is learning robust causal reasoning (estimating $P(Y \mid X, \mathcal{C})$) or merely memorizing context shortcuts (estimating $P(Y \mid \mathcal{C})$).
To rigorously verify the effectiveness of our debiasing strategy, we establish a diagnostic framework based on \textbf{Residual Orthogonality}. This framework analyzes the statistical properties of the model's errors relative to the known capabilities of the policies.

\subsection{Defining Statistical Priors}
To quantify the shortcut information available to the model, we calculate the empirical priors from the training history.

\begin{definition}[Context Prior $\mu(\mathcal{C})$]
We operationalize the theoretical latent capability $\mu(\mathcal{C})$ as the average empirical success rate of a specific policy checkpoint $\mathcal{C}$ across all its historical samples:
\begin{equation}
    \mu(\mathcal{C}) \triangleq \frac{1}{N_{\mathcal{C}}} \sum_{j=1}^{N_{\mathcal{C}}} y_j^{(\mathcal{C})}
\end{equation}
where $N_{\mathcal{C}}$ denotes the total number of historical rollout samples available for policy $\mathcal{C}$.
\begin{itemize}
    \item \textbf{Significance:} This term serves as a proxy for the \textit{Shortcut Information}. It represents the policy's baseline strength (\textit{e.g.}, a 70B model has a higher $\mu(\mathcal{C})$ than a 7B model). A model relying solely on this prior acts as a heuristic lookup table, ignoring the specific semantic complexity of the query.
\end{itemize}
\end{definition}

\begin{figure*}[t]
    \centering
    \includegraphics[width=0.87\textwidth]{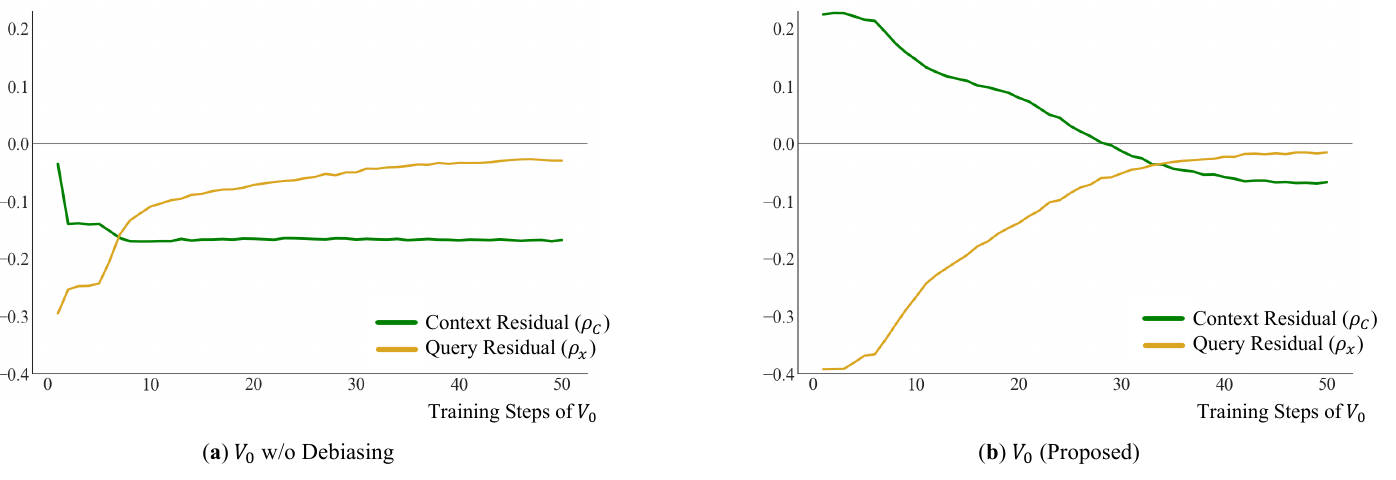}
    \caption{\textbf{Evolution of Residual Orthogonality during Training.} 
The \textit{left} plot shows the $V_0$ fine-tuning TabPFN head, where residuals fail to converge to zero, indicating the overfitting. 
The \textit{right} plot demonstrates our $V_0$, where both \textit{Context Residual} and \textit{Query Residual} converge towards zero (dashed line), confirming that the model has successfully decoupled reasoning from statistical shortcuts.}
\label{fig:residual_dynamics}
\end{figure*}

\begin{definition}[Query Difficulty $D_{x}$]
For a given query $x$, the Query Difficulty prior $D_{x}$ is the average success rate on this query across all policies that have attempted it:
\begin{equation}
    D_{x} \triangleq \frac{1}{M_{x}} \sum_{k=1}^{M_{x}} y_k^{(x)}
\end{equation}
where $M_{x}$ is the frequency that $x$ has been evaluated. This term proxies the intrinsic difficulty independent of the model.
\end{definition}

\subsection{Diagnostic Metric: Residual Orthogonality}
To empirically validate the Shift-Invariance property, we focus on the correlation between the model's prediction errors and the statistical priors defined above. We establish a dual-metric diagnostic framework:

\begin{enumerate}
    \item \textbf{Context Residual:} Measures dependency on model identity.
    \begin{equation}
        \text{Residual}_{\mathcal{C}} = \text{Spearman} \, \rho(\underbrace{\hat{y} - y}_{\text{Error}}, \mu(\mathcal{C}))
    \end{equation}
    \item \textbf{Query Residual:} Measures dependency on statistical difficulty.
    \begin{equation}
        \text{Residual}_{x} = \text{Spearman} \, \rho(\underbrace{\hat{y} - y}_{\text{Error}}, D_{x})
    \end{equation}
\end{enumerate}
where $\hat{y}$ is the predicted value, $y$ is the ground truth label, and $\rho$ denotes the rank correlation coefficient.

\subsection{Interpretation: Acquisition \textit{vs}. Dependency (\autoref{fig:residual_dynamics})}
A common misconception is that a debiased model should ignore the inherent capabilities of different policies or the difficulty of queries. On the contrary, these priors are valid components of the ground truth. Our framework distinguishes between valid \textbf{information acquisition} and invalid \textbf{shortcut dependency} through the lens of residual orthogonality. 

We posit that a perfectly trained value model $V^*$ decomposes into the baseline priors and a reasoning delta:
\begin{equation}
    V^*(x, \mathcal{C}) = \underbrace{\Phi(\mu(\mathcal{C}), D_{x})}_{\text{Statistical Baselines}} + \underbrace{\Delta(x, \mathcal{C})}_{\text{Reasoning Interaction}}
\end{equation}

Consequently, when both trajectories settle near zero, it demonstrates that the model's remaining errors are irreducible random noise ($\epsilon$), statistically independent of both the policy's history and the query's popularity. This confirms the model has transitioned from shortcut learning to robust causal reasoning.

\section{Derivation of Budget Allocation Utility (\autoref{sec:experiments})}
\label{app:utility_derivation}

In this section, we provide the theoretical justification for the utility function used in the Budget Allocation module (\autoref{eq:utility} in the main text). We first establish the relationship between the gradient norm and the sum of absolute advantages in GRPO. We then derive the closed-form approximation for the expected utility.

\subsection{Gradient Bound via Advantage Sum}

We define the utility of a set of rollouts based on the potential magnitude of the gradient update. We propose that the norm of the policy gradient vector is bounded by the sum of the absolute advantages within the group.

\begin{proposition}
\label{prop:gradient_bound}
In Group Relative Policy Optimization (GRPO), let $J(\theta)$ be the objective function. The norm of the gradient update is bounded by the sum of absolute advantages, scaled by the Lipschitz constant of the policy's log-likelihood:
\begin{equation}
||\nabla_{\theta} J(\theta)|| \le \gamma(s) \cdot \frac{1}{G} \sum_{i=1}^{G} |A_i|
\end{equation}
where $G$ is the group size, $A_i$ is the advantage of the $i$-th sample, and $\gamma(s)$ is a context-dependent constant.
\end{proposition}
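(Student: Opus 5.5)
We write the GRPO objective for a single prompt $s$ as a sum over the group, $J(\theta) = \frac{1}{G}\sum_{i=1}^{G} A_i \, \ell_i(\theta)$ with $\ell_i(\theta) = \log \pi_\theta(y_i \mid s)$ (or its token-averaged version). The key modelling convention is that the advantages $A_i$ are computed from the group rewards and treated as constants with respect to $\theta$. This is the standard stop-gradient convention in GRPO. Differentiating under that convention gives
\begin{equation}
\nabla_\theta J(\theta) = \frac{1}{G}\sum_{i=1}^{G} A_i \, \nabla_\theta \log \pi_\theta(y_i \mid s).
\end{equation}
If one uses the clipped surrogate with importance ratios $\rho_i$, the same form holds with an extra factor $\rho_i$ or $0$ per term. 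We handle this by keeping the update within the trust region, so that $\rho_i \le 1+\epsilon$ and this factor is absorbed into the constant below.

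Next we apply the triangle inequality to get $\|\nabla_\theta J\| \le \frac{1}{G}\sum_i |A_i|\,\|\nabla_\theta \log \pi_\theta(y_i\mid s)\|$. We then need a uniform bound on the score norms. Assume $\log\pi_\theta(\cdot\mid s)$ is Lipschitz in $\theta$ over the region visited, i.e.\ $\|\nabla_\theta \log \pi_\theta(y \mid s)\| \le L(s)$ for every response $y$ producible from prompt $s$. For autoregressive generation we would justify this token by token: the sequence log-likelihood is a sum of per-token log-softmax terms. Each has gradient bounded by the network's Jacobian norm times $2$, because the gradient of a log-softmax with respect to the logits is $e_y - p$, whose norm is at most $\sqrt{2}$. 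Summing over at most $T_{\max}(s)$ tokens, or averaging under length normalisation, gives a finite $L(s)$. Setting $\gamma(s) = (1+\epsilon)\,L(s)$, or simply $L(s)$ in the unclipped case, then yields exactly the bound stated in Proposition~\ref{prop:gradient_bound}.

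The main obstacle is the Lipschitz step. It is not true unconditionally, since score norms blow up as token probabilities approach $0$ relative to parameter scale, and they grow with response length. So the honest content of the proposition is conditional on a bounded-parameter or bounded-Jacobian assumption, and we would state that assumption explicitly as the definition of $\gamma(s)$. We would first try to define $\gamma(s) \triangleq \sup_{\theta \in \Theta,\, y} \|\nabla_\theta \log\pi_\theta(y\mid s)\|$ over a compact parameter neighbourhood $\Theta$ reached during one update step, together with the clipping factor; this makes the inequality immediate.

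Finally, we would remark that the bound is tight in its dependence on the advantages. When all rollouts agree, every $A_i = 0$, so the gradient vanishes. This motivates using $\mathbb{E}\big[\sum_i |A_i|\big]$ as the utility in the subsequent derivation of~\autoref{eq:utility}.
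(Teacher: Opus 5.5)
Your proof is correct and follows essentially the same route as the paper: write the gradient as $\frac{1}{G}\sum_i A_i \nabla_\theta \log \pi_\theta(y_i\mid s)$ with advantages held fixed, apply the triangle inequality, and bound each score norm by an assumed Lipschitz constant (the paper's $\gamma(x;\theta)$). Working with the per-group objective from the start is cleaner than the paper's proof, which writes $\nabla_\theta J$ as an expectation and then bounds only the single-group estimator; note, though, that your clipping remark needs $\rho_i \le 1+\epsilon$ as a separate assumption, because the clipped surrogate keeps the gradient active (with factor $\rho_i$) when $A_i<0$ even if $\rho_i > 1+\epsilon$.
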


\begin{proof}
The gradient of the GRPO objective with respect to parameters $\theta$ is given by:
\begin{equation}
\nabla_{\theta} J(\theta) = \mathbb{E} \left[ \frac{1}{G} \sum_{i=1}^{G} A_i \cdot \nabla_{\theta} \log \pi_{\theta}(y_i|x) \right]
\end{equation}
We assume the policy model $\pi_{\theta}$ satisfies a Lipschitz continuity condition locally, such that the norm of the score function is bounded for a given input $x$:
\begin{equation}
||\nabla_{\theta} \log \pi_{\theta}(y|x)|| \le \gamma(x; \theta)
\end{equation}
Applying the norm to the gradient estimator and utilizing the Triangle Inequality ($||\sum v_i|| \le \sum ||v_i||$):
\begin{align}
||\nabla_{\theta} J(\theta)|| &= \left|\left| \frac{1}{G} \sum_{i=1}^{G} A_i \cdot \nabla_{\theta} \log \pi_{\theta}(y_i|x) \right|\right| \\
&\le \frac{1}{G} \sum_{i=1}^{G} \left|\left| A_i \cdot \nabla_{\theta} \log \pi_{\theta}(y_i|x) \right|\right| \\
&= \frac{1}{G} \sum_{i=1}^{G} |A_i| \cdot \left|\left| \nabla_{\theta} \log \pi_{\theta}(y_i|x) \right|\right|
\end{align}
Substituting the Lipschitz bound $\gamma(x; \theta)$:
\begin{equation}
||\nabla_{\theta} J(\theta)|| \le \frac{\gamma(x; \theta)}{G} \sum_{i=1}^{G} |A_i|
\end{equation}
\end{proof}
This result suggests that maximizing the sum of absolute advantages $\sum |A_i|$ effectively maximizes the upper bound of the update \textit{force} (gradient magnitude), maximizing the potential learning signal for a given step.

\subsection{Derivation of Expected Signal Strength}

We now derive the closed-form utility function. Let $B$ denote the group size (referred to as budget in~\autoref{eq:utility}) and $p$ denote the predicted success probability of the policy for a given prompt.

Let the random variable $k_{\text{succ}} \sim \text{Binomial}(B, p)$ represent the number of correct responses in a group of size $B$. The probability of observing exactly $k$ correct responses is:
\begin{equation}
P(k_{\text{succ}}=k|B,p) = \binom{B}{k} p^k (1-p)^{B-k}
\end{equation}

In GRPO with binary rewards ($r \in \{0, 1\}$), for a group with $k$ successes, the group mean is $\mu = k/B$. We have:
\begin{equation}
\sigma = \sqrt{\frac{k(B-k)}{B^2}} = \frac{\sqrt{k(B-k)}}{B}
\end{equation}
The advantages for positive samples ($A_{\text{pos}}$) and negative samples ($A_{\text{neg}}$) are calculated as:
\begin{align}
A_{\text{pos}}(k) &= \frac{1 - \mu}{\sigma} = \frac{1 - k/B}{\frac{1}{B}\sqrt{k(B-k)}} = \sqrt{\frac{B-k}{k}} \\
A_{\text{neg}}(k) &= \frac{0 - \mu}{\sigma} = \frac{-k/B}{\frac{1}{B}\sqrt{k(B-k)}} = -\sqrt{\frac{k}{B-k}}
\end{align}
We define the \textit{Gradient Signal Strength} $S(k)$ as the sum of absolute advantages in the group (ignoring the constant factor $1/G$ from the previous part for optimization purposes):
\begin{align}
S(k) &= \sum |A_i| = k \cdot |A_{\text{pos}}(k)| + (B-k) \cdot |A_{\text{neg}}(k)| \nonumber \\
&= k \sqrt{\frac{B-k}{k}} + (B-k) \sqrt{\frac{k}{B-k}} \nonumber \\
&= \sqrt{k(B-k)} + \sqrt{k(B-k)} = 2\sqrt{k(B-k)}
\end{align}
The expected signal strength is the expectation over the binomial distribution, summing over valid cases where variance is non-zero (\textit{i.e.}, $k \neq 0$ and $k \neq B$). We have:
\begin{equation}
\mathbb{E}[S] = \sum_{k=1}^{B-1} P(k_{\text{succ}}=k) \cdot 2\sqrt{k(B-k)}
\end{equation}
Calculating this involves fractional moments and is computationally expensive to solve in closed form for real-time allocation. To obtain a tractable surrogate, we introduce a scaling factor $\lambda(k)$ based on the positive advantage:
\begin{equation}
\lambda(k) \triangleq \frac{1}{2} A_{\text{pos}}(k) = \frac{1}{2} \sqrt{\frac{B-k}{k}}
\end{equation}
Multiplying the signal strength by this factor allows us to approximate the utility by focusing on the contribution of negative samples (the learning from mistakes signal):
\begin{equation}
S_{\text{proxy}}(k) = S(k) \cdot \lambda(k) = 2\sqrt{k(B-k)} \cdot \frac{1}{2}\sqrt{\frac{B-k}{k}} = B - k
\end{equation}
We now calculate the expected utility of this proxy metric, effectively measuring the expected number of errors conditioned on the gradients being non-zero:
\begin{equation}
\text{Utility}(B, p) \approx \mathbb{E}[S_{\text{proxy}}] = \sum_{k=1}^{B-1} P(k_{\text{succ}}=k) \cdot (B-k)
\end{equation}
This summation can be solved by considering the full binomial expectation and subtracting the edge cases:
\begin{align}
\sum_{k=1}^{B-1} P(k)(B-k) &= \underbrace{\sum_{k=0}^{B} P(k)(B-k)}_{\mathbb{E}[B - k_{\text{succ}}]} - \underbrace{P(0)(B-0)}_{\text{All Wrong}} - \underbrace{P(B)(B-B)}_{\text{All Correct}} \\
&= \left( B - \mathbb{E}[k_{\text{succ}}] \right) - B(1-p)^B - 0 \\
&= B(1-p) \left[ 1 - (1-p)^{B-1} \right]
\end{align}
Finally, matching the \autoref{eq:utility} of the main paper:
\begin{equation}
\text{Utility}(B_i, p_i) = B_i(1-p_i) \left[ 1 - (1-p_i)^{B_i-1} \right]
\end{equation}

\section{Information-Theoretic Interpretation (\autoref{sec:experiments})}
\label{sec:appendix_baselines_theory}

In the main paper (\autoref{sec:mi_perspective}), we decomposed the information available for value estimation into a context prior term and a causal interaction term:
\begin{equation}
    I(Y; X, \mathcal{C}) = \underbrace{I(Y; \mathcal{C})}_{\text{Context Shortcut}} + \underbrace{I(Y; X \mid \mathcal{C})}_{\text{Causal Reasoning}}
\end{equation}
In this section, we first analyze the Reward Model \& $k$NN-Contextual baselines used in our experiments through this information-theoretic lens. We demonstrate that these baselines represent incomplete approximations of the full mutual information term, whereas $V_0$ aims to capture the complete conditional distribution.

\subsection{Reward Models as Estimators of Global Difficulty $I(Y; X)$}

The \textit{Reward Model (RM)} baseline (\textit{e.g.}, \textit{Qwen2.5-Math-RM-72B}) estimates the value of a query $x$ using a fixed set of parameters $\theta_{\text{RM}}$, independent of the policy $\pi$ being evaluated. Formally, it models:
\begin{equation}
    V_{\text{RM}}(x) \approx P(Y=1 | X=x, \theta_{\text{RM}})
\end{equation}

\textbf{Theoretical Limitation:}
By ignoring the specific policy context $\mathcal{C}$, the RM assumes conditional independence $Y \perp \mathcal{C} | X$. However, this assumption holds only if all policies have identical capabilities (\textit{i.e.}, $\pi_i \approx \pi_j$). In our setting, where policies evolve or vary in size, this assumption is violated. The information gap is quantified by the conditional mutual information of the context given the query:
\begin{equation}
    \text{Information Gap} = I(Y; X, \mathcal{C}) - I(Y; X) = I(Y; \mathcal{C} | X)
\end{equation}
This implies that the RM cannot distinguish between a \textit{hard query for a weak model} and a \textit{hard query for a strong model}.

\subsection{$k$NN as Non-Parametric Estimation of $I(Y; X | \mathcal{C})$}

The \textbf{$k$NN-Contextual} baseline stores the history $\mathcal{C} = \{(x_i, r_i)\}_{i=1}^N$ explicitly. For a target query $x_t$, it retrieves the set of nearest neighbors $\mathcal{N}_k(x_t) \subset \mathcal{C}$ based on semantic similarity and estimates:
\begin{equation}
    V_{k\text{NN}}(x_t, \mathcal{C}) = \frac{1}{k} \sum_{(x_j, r_j) \in \mathcal{N}_k(x_t)} r_j
\end{equation}
\textbf{Theoretical Mapping:}
Unlike the RM, the $k$NN approach conditions on the context $\mathcal{C}$. It can be viewed as a \textit{non-parametric local estimator} of the interaction term $I(Y; X \mid \mathcal{C})$. It attempts to approximate the posterior predictive distribution $P(Y \mid X, \mathcal{C})$ by assuming that the capability function is locally constant in the semantic embedding space.

\textbf{Theoretical Limitation:}
While $k$NN theoretically accesses the correct information channel $I(Y; X \mid \mathcal{C})$, it suffers from two critical limitations compared to the parametric $V_0$:
\begin{enumerate}
    \item The estimation error of $k$NN scales with the sparsity of the context $\mathcal{C}$. In high-dimensional semantic spaces, the nearest neighbors might still be semantically distant, leading to high variance in the estimator.
    \item $k$NN relies strictly on geometric proximity. It cannot learn higher-order logical patterns (meta-knowledge). For example, if a policy consistently fails at geometry problems, $V_0$ can infer this capability boundary even if the specific geometry query $x_t$ has no close vector neighbors in $\mathcal{C}$. $k$NN fails to extract these latent capability features, bounded by the density of the observed support.
\end{enumerate}

\subsection{Linking Evaluation Metrics to Information Components}
\label{app:eval_metrics_to_mi}
To validate that $V_0$ learns the robust causal reasoning term $I(Y; X \mid \mathcal{C})$ rather than relying on the context shortcut $I(Y; \mathcal{C})$, we map our evaluation metrics to the information decomposition.

\subsubsection{Intra-Context AUC as a Proxy for Causal Reasoning $I(Y; X \mid \mathcal{C})$}

The Intra-Context AUC measures the ability to rank queries $x_i, x_j$ correctly within a single fixed policy checkpoint $\mathcal{C}$. This corresponds to evaluating the discriminative power strictly under the conditional distribution $P(Y \mid X, \mathcal{C} = \mathcal{C}_{\pi})$.

\textbf{Formal Relation:}
Consider a \textit{shortcut-only} estimator $V_{\text{shortcut}}(x, \mathcal{C}) = P(Y=1 \mid \mathcal{C})$, which captures the entire context prior term $I(Y; \mathcal{C})$ but ignores the query interaction $I(Y; X \mid \mathcal{C})$. For any fixed context $\mathcal{C}$, this estimator outputs a constant score $s = \mu(\mathcal{C})$ for all $x$. Consequently, the ROC curve collapses to the diagonal, yielding an AUC of $0.5$.

Therefore, any gain in \textit{Intra-Context AUC} above 0.5 is attributable \textit{exclusively} to the utilization of the interaction term:
\begin{equation}
    \text{Gain}_{\text{AUC}} \propto I(\hat{Y}; X \mid \mathcal{C})
\end{equation}
By optimizing the Pairwise Ranking Loss $\mathcal{L}_{\text{rank}}$ (which is shift-invariant), $V_0$ explicitly maximizes this term, ensuring the model learns the causal relationship between query and success, rather than relying on the easier context prior.

\subsubsection{Pairwise Calibration Accuracy and Context Dependence $I(Y; \mathcal{C})$}

The \textit{Pairwise Calibration Accuracy} evaluates pairs of the \textit{same} query evaluated by \textit{different} policy checkpoints: $(x, \mathcal{C}_i)$ \textit{vs}. $(x, \mathcal{C}_j)$, where the ground truth implies a shift in capability ($r_i \neq r_j$). This metric isolates the variability in $Y$ driven by $\mathcal{C}$ while $X$ remains constant.

\textbf{Formal Relation:}
Consider a standard \textit{Reward Model} estimator $V_{\text{RM}}(x) \approx P(Y \mid X)$, which captures only $I(Y; X)$. Since $V_{\text{RM}}$ is independent of $\mathcal{C}$, for any query $x$, the predicted score remains invariant across policies:
\begin{equation}
    s(x, \mathcal{C}_i) = s(x, \mathcal{C}_j) \implies P(\text{correct ranking}) \approx 0.5 \quad (\text{like random guess})
\end{equation}
We distinguish three cases to illustrate why this metric should be interpreted alongside AUC:
\begin{enumerate}
    \item \textbf{Reward Models (Underfitting $\mathcal{C}$):} A standard RM captures only $I(Y; X)$. Since it is independent of $\mathcal{C}$, $s(x, \mathcal{C}_i) = s(x, \mathcal{C}_j)$, leading to random guessing accuracy ($\approx 0.5$).
    
    \item \textbf{Shortcut Models (Overfitting $\mathcal{C}$):} A model that learns \textit{only} the shortcut $I(Y; \mathcal{C})$ (\textit{i.e.}, judging policy strength while ignoring the query) will achieve high pairwise accuracy. It correctly identifies that a stronger policy $\mathcal{C}_{\text{strong}}$ is more likely to succeed than $\mathcal{C}_{\text{weak}}$ on average. However, such a model would yield an Intra-Context AUC of $0.5$.
    
    \item \textbf{Generalist $V_0$ (Balanced):} A robust model must achieve high scores on \textit{both} metrics. High \textit{Pairwise Accuracy} confirms it tracks the non-stationary policy (capturing $I(Y; \mathcal{C})$), while high \textit{Intra-Context AUC} confirms it understands specific problem difficulties (capturing $I(Y; X \mid \mathcal{C})$).
\end{enumerate}

\section{Case Study: The Insufficiency of State-Only Value Estimation}
\label{app:state_only_counter_example}

A prevailing hypothesis in recent research suggests that the dependency of the value function on the specific policy $\pi$ can be relaxed, simplifying $V^{\pi}(s)$ to a policy-agnostic $V(s)$. The rationale is that the partial trajectory $s$ generated by a LLM already implicitly encodes sufficient information about the capabilities.
However, we present a counter-example that challenges this claim. We observe that models with vastly different capabilities (\textit{e.g.}, a 1.5B model \textit{vs}. a 4B model) often generate semantically identical prefixes for mathematical reasoning tasks due to the deterministic nature of initial logical deductions. Yet, their final outcomes diverge significantly based on their varying reasoning depths. This phenomenon proves that $V(s)$ is insufficient and that explicit conditioning on the policy $\pi$ (or its capability context $\mathcal{C}_{\pi}$, as in $V_0$) is essential.

\paragraph{The Counter-Example Scenario.}
Consider the following complex analysis problem:

\begin{center}
\begin{minipage}{0.8\linewidth}
\textbf{Prompt:} Given a complex number $z$ such that $z - \frac{4}{z}$ is purely imaginary, find the integer approximation of the minimum value of $|z - 1 - i|$.
\end{minipage}
\end{center}

We compare the rollout trajectories of a 1.5B Model (Weak Policy) and a 4B Model (Strong Policy). The 4B model correctly solves this problem, while the 1.5B model fails.

\paragraph{Phase 1: Indistinguishable Initial Trajectories.}
In the initial modeling and derivation phase, the generated trajectories are nearly identical. This is because the mathematical derivation follows a rigorous, almost unique path:

\begin{itemize}[noitemsep,topsep=3pt,leftmargin=*]
    \item Both models begin by letting $z = x + yi$ (or $a + bi$).
    \item Both models calculate $z - \frac{4}{z}$ and simplify it to separate the real and imaginary parts.
    \item Both models apply the purely imaginary condition (setting real part to 0) and correctly derive the two possible loci for $z$:
    \begin{enumerate}
        \item $x = 0$ (The imaginary axis).
        \item $x^2 + y^2 = 4$ (A circle centered at the origin with radius 2).
    \end{enumerate}
\end{itemize}

At this stage, a state-only value model $V(s)$ may assign the same value to both trajectories, as the text $s$ is mathematically identical and correct.

\paragraph{Phase 2: Divergence Driven by Intrinsic Capability.}

The divergence occurs immediately after determining the locus, specifically during the optimization step: ``How to minimize the distance from the circle $x^2 + y^2 = 4$ to the point $(1, 1)$.''

\begin{itemize}[noitemsep,topsep=3pt,leftmargin=*]
    \item \textbf{4B Model (Success):}
    The model employs a geometric approach. It calculates the modulus of the target point $|1+i| = \sqrt{2}$. Since $\sqrt{2} < 2$, it recognizes the point is \textit{inside} the circle. It then correctly deduces that the minimum distance is along the radius: $R - |1+i| = 2 - \sqrt{2} \approx 0.58$, leading to the correct integer approximation.
    
    \item \textbf{1.5B Model (Failure):}
    The model hesitates on the geometric relationship (unable to robustly determine if the point is inside or outside). It abandons the geometric insight and switches to an algebraic/trigonometric substitution method, setting $x = 2\cos\theta, y = 2\sin\theta$. Due to the complexity of minimizing the resulting trigonometric function, the model hallucinates intermediate steps or makes calculation errors, leading to an incorrect result.
\end{itemize}

This case study demonstrates that for two policies $\pi_{\text{weak}}$ and $\pi_{\text{strong}}$, we can have a state $s$ such that $s_{\pi_{\text{weak}}} = s_{\pi_{\text{strong}}}$, yet the true values differ drastically: $V^{\pi_{\text{weak}}}(s) \approx 0$ while $V^{\pi_{\text{strong}}}(s) \approx 1$.
Therefore, the value function cannot be decoupled from the policy. The indistinguishability of early trajectories necessitates an explicit representation of the policy's capability, validating the design of $V_0(\pi, s)$ where $\pi$ is provided via context.

\section{Detailed Implementation, Hyperparameters, and Computational Analysis}
\label{app:detailed_implementation}

\paragraph{Residual Query Adapter Configuration.} We configure the Residual Query Adapter with $N_{\text{static}}=168$ static queries and a projection dimension of $=6$. We select these specific values for: The total feature dimension yields $168 \times 6 = 1008$, which approximates 1k; The projection dimension of 6 allows for distinct differentiation between queries while ensuring divisibility by 3, which aligns with the encoding structure of the TabPFN inference head (processing features in groups of 3).

\subsection{Budget Allocation Settings}

For the Dynamic Budget Allocation experiments (\autoref{sec:experiments}), we integrated $V_0$ into the GRPO training loop.
The specific hyperparameters used for the policy optimization are detailed in~\autoref{tab:grpo_params}.

\begin{table}[h]
\centering
\caption{Hyperparameters for GRPO Training with $V_0$-guided Budget Allocation.}
\label{tab:grpo_params}
\begin{small}
\begin{tabular}{lc}
\toprule
\textbf{Hyperparameter} & \textbf{Value} \\
\midrule
Learning Rate & $1 \times 10^{-6}$ \\
KL Coefficient (\texttt{kl\_loss\_coef}) & 0.001 \\
Global Batch Size & 512 \\
PPO Mini-Batch Size & 256 \\
Group Size ($G$) & 16 \\
PPO Clip Ratio (Low) & 0.2 \\
PPO Clip Ratio (High) & 0.28 \\
Training Steps & 132 \\
\bottomrule
\end{tabular}
\end{small}
\end{table}

\paragraph{Allocation Constraints.}
When $V_0$ dynamically allocates the rollout budget $B_i$ for a specific prompt $x_i$, we enforce hard constraints to prevent computational collapse or explosion. The allocated budget is clipped to the range:
\begin{equation}
    B_i \in [2, 128]
\end{equation}

\subsection{Inference Routing Configuration}

\paragraph{Model Fleet and Benchmarks.}
We construct a heterogeneous model fleet consisting of 11 widely-used open-source LLMs, with parameter counts ranging from 0.6B to 32B. To ensure a robust evaluation of capability, the fleet is evaluated across a comprehensive suite of 12 benchmarks covering mathematics, logical reasoning, and general knowledge. The performance metric reported is the \textit{Average} \texttt{avg@10} (success rate averaged over 10 stochastic generations).

\paragraph{Cost Formulation.}
To simulate real-world API pricing or serving costs, we calculate the \textit{Inference Cost} for each model based on token consumption and model size. The cost $c_\pi$ for model $\pi$ is defined as:
\begin{equation}
    c_\pi = \text{Params Ratio of } \pi \times \text{Tokens}_{\text{avg.}}
\end{equation}
\textit{e.g.}, the 7B model's ratio is 7, the 30B-A3B model's ratio is 15.

\paragraph{Baselines and Pareto Analysis.}
We compare $V_0$ against two competitive routing baselines: EmbedLLM and Model-SAT. We also include an Oracle baseline, representing the theoretical upper bound where the router perfectly selects the most efficient model that can solve the query. 

For our method ($V_0$), we generate the Pareto frontier by sweeping the cost-tradeoff coefficient $\beta$. This allows us to visualize the transition from \textit{maximum efficiency} modes (prioritizing low cost) to \textit{maximum performance} modes (prioritizing accuracy).

\subsection{Data Construction and Ground Truth}

To train $V_0$ and evaluate GT performance, we generate rollouts for every query-policy pair. The hyperparameters are:

\begin{table}[h]
\centering
\caption{Hyperparameters for Rollouts for Every Query-policy Pair.}
\label{tab:rollouts_params}
\begin{small}
\begin{tabular}{lc}
\toprule
\textbf{Hyperparameter} & \textbf{Value} \\
\midrule
Sampling $n$ & 10 (and we compute \texttt{avg@10}) \\
Temperature & 1.0 \\
Top-p & 0.9 \\
\bottomrule
\end{tabular}
\end{small}
\end{table}

\subsection{Computational Efficiency Analysis}

A critical requirement for an auxiliary value model is that it should not introduce significant latency. $V_0$ is designed to be lightweight. On a standard inference setup, $V_0$ processes a batch of 8 samples in approximately \texttt{600ms}. This low overhead confirms that $V_0$ is viable for real-time routing and dynamic training allocation without becoming a bottleneck.

\section{Context Length Scaling Analysis}

We conduct an ablation study on the context size $N$. The results are presented in~\autoref{tab:context_scaling}.

\begin{table}[h]
\centering
\small
\caption{Impact of Context Size on Value Estimation Performance. We evaluate $V_{0}$ varying the number of historical query-performance pairs $N$ in the context $\mathcal{C}_{\pi}$, following the experimental setup of~\autoref{tab:vm_ablation_studies}.}
\label{tab:context_scaling}
\begin{tabular}{lcc}
\toprule
\textbf{Context Size} ($N$) & \textbf{Intra AUC} & \textbf{Pair. Acc.} \\
\midrule
32 & 0.538 & 0.765 \\
64 & 0.553 & 0.776 \\
128 & 0.589 & 0.804 \\
256 & 0.705 & 0.839 \\
512 & 0.733 & 0.856 \\
\bottomrule
\end{tabular}
\end{table}

\textbf{Discussion.} We observe a distinct performance threshold regarding the context length. With small contexts ($N \le 128$), the Intra-Context AUC remains constrained near 0.5, indicating that the model fails to discriminate query difficulty effectively. This suggests that a limited sample size is statistically insufficient to characterize the complex latent capability of a policy. Performance improves significantly at $N=256$, as the context becomes dense enough to represent the policy's identity.

\section{Details of \autoref{fig:application}}

\label{app:details_of_fig_application}

To further validate the effectiveness of $V_0$ as a resource scheduler, we provide a fine-grained analysis of the Budget Allocation experiment across five distinct mathematical benchmarks: AIME 2024, AIME 2025, AMC 23, MATH 500, and OlympiadBench. As discussed in~\autoref{sec:experiments}, the ``Budget Allocation w/o $V_0$'' baseline applies~\autoref{eq:utility} and relies on lagged heuristics (success rates from previous epochs), whereas ``Budget Allocation w $V_0$'' applies~\autoref{eq:utility} but utilizes $V_0$ to predict the probability of the current policy on specific prompts in real-time, allowing for zero-shot budget assignment.

The detailed results are presented in~\autoref{tab:budget_allocation_detailed}. We observe that Budget Allocation w/ $V_0$ consistently outperforms the standard GRPO baseline across all benchmarks and surpasses the heuristic allocation method (w/o $V_0$) on four out of five datasets. Notably, on the highly challenging AIME 2024 benchmark, $V_0$ achieves a significant improvement, increasing accuracy from 41.04\% (GRPO) and 44.58\% (w/o $V_0$) to 50.21\%. This indicates that $V_0$'s ability to identify the capability boundaries of the model is particularly beneficial for difficult tasks where effectively allocating compute to solvable but hard problems yields the highest marginal utility. While the heuristic method performs marginally better on MATH 500 (+0.47\%), $V_0$ demonstrates superior generalization and efficiency on complex reasoning tasks, confirming its robustness as a dynamic budget allocator.

We also show the details of the model fleet for Inference Routing in~\autoref{tab:model_fleet_routing}.

\begin{table}[t]
    \centering
    \caption{\textbf{Detailed Statistics of the Model Fleet.} We report the average accuracy across 12 benchmarks and the corresponding average inference cost. The \textit{Base Models} section lists the candidate models available in the fleet, while the subsequent sections show the aggregate performance of different routing strategies.}
    \label{tab:model_fleet_routing}
    \resizebox{\textwidth}{!}{
    \begin{tabular}{lcccccccccccccc}
\toprule
\textbf{Model} & 
addsub & 
\makecell{aime\\2024} & 
\makecell{aime\\2025} & 
amc23 & 
\makecell{college\\math} & 
\makecell{gaokao\\2023en} & 
\makecell{gaokao\\math\\cloze} & 
gpqa & 
\makecell{math\\hard} & 
math500 & 
\makecell{minerva\\math} & 
olympiad & 
\makecell{\textbf{avg.}\\\textbf{perf}} & 
\makecell{\textbf{avg.}\\\textbf{cost}} \\
\midrule
DeepSeek-R1-Distill-Qwen-1.5B & .911 & .093 & .113 & .403 & .566 & .596 & .692 & .120 & .448 & .657 & .283 & .315 & .433 & 4,605 \\
DeepSeek-R1-Distill-Qwen-7B & .953 & .163 & .173 & .523 & .632 & .676 & .807 & .216 & .572 & .742 & .426 & .399 & .523 & 20,535 \\
Qwen3-0.6B & .899 & .007 & .040 & .275 & .507 & .522 & .463 & .110 & .296 & .558 & .230 & .251 & .346 & 1,911 \\
Qwen3-1.7B & .960 & .033 & .070 & .338 & .550 & .561 & .566 & .107 & .338 & .610 & .341 & .268 & .395 & 5,717 \\
Qwen3-14B & .968 & .063 & .047 & .377 & .572 & .594 & .567 & .201 & .378 & .655 & .399 & .295 & .426 & 46,484 \\
Qwen3-30B-A3B-Instruct-2507 & .989 & .343 & .307 & .710 & .715 & .818 & .931 & .469 & .791 & .884 & .569 & .574 & .675 & 27,428 \\
Qwen3-30B-A3B-Thinking-2507 & .976 & .007 & .010 & .230 & .655 & .593 & .848 & .267 & .285 & .543 & .514 & .192 & .427 & 46,786 \\
Qwen3-32B & .979 & .100 & .060 & .392 & .582 & .618 & .621 & .260 & .420 & .667 & .429 & .327 & .455 & 105,111 \\
Qwen3-4B-Instruct-2507 & .972 & .287 & .270 & .740 & .695 & .807 & .922 & .418 & .766 & .856 & .510 & .568 & .651 & 13,577 \\
Qwen3-4B-Thinking-2507 & .936 & .020 & .010 & .243 & .622 & .567 & .763 & .180 & .256 & .520 & .406 & .175 & .392 & 12,983 \\
Qwen3-8B & .976 & .027 & .020 & .330 & .531 & .555 & .528 & .161 & .325 & .603 & .341 & .256 & .388 & 27,523 \\
\bottomrule
\end{tabular}
}
\end{table}

\begin{table}[t]
\centering
\small
\caption{\textbf{Detailed Performance Comparison of Budget Allocation on Qwen3-4B-Instruct-2507.} We report the accuracy across five benchmarks. Budget Allocation w/o $V_0$ utilizes lagged heuristics from previous training epochs, while Budget Allocation w/ $V_0$ uses our proposed generalist value model for real-time difficulty estimation. The best performance in each column is highlighted in \textbf{bold}.}
\label{tab:budget_allocation_detailed}
\vspace{0.2cm}
\begin{tabular}{lccccc}
\toprule
\textbf{Method} & \textbf{AIME 2024} & \textbf{AIME 2025} & \textbf{AMC 23} & \textbf{MATH 500} & \textbf{OlympiadBench} \\
\midrule
GRPO & .4104 & .3188 & .8578 & .8931 & .5453 \\
Budget Allocation w/o $V_0$ & .4458 & .3604 & .8703 & \textbf{.9088} & .5497 \\
Budget Allocation w/ $V_0$ (Ours) & \textbf{.5021} & \textbf{.3656} & \textbf{.8984} & .9041 & \textbf{.5634} \\
\bottomrule
\end{tabular}
\end{table}

\newpage
\section{Extended Related Work}
\label{app:extended_related_work}

\paragraph{Limitations of Current Optimization Paradigms.}
Current paradigms for post-training optimization can be broadly categorized into value-free and coupled approaches, each presenting distinct challenges. 
Value-free methods, such as Group Relative Policy Optimization (GRPO), are designed to circumvent the architectural complexity of maintaining an independent value model. However, they frequently encounter severe bias-variance tradeoffs, particularly when applied to complex reasoning tasks~\cite{delve_into_ppo, truncated_ppo}. To mitigate the high variance associated with gradient estimation, these methods typically necessitate massive sampling budgets~\cite{open_reasoner_zero,MSSR}. Yet, this reliance on extensive sampling faces a critical failure mode: on tasks that are either trivial (fully mastered) or effectively impossible (unsolvable), the reward variance vanishes. This phenomenon leads to \textit{advantage collapse}, rendering the optimization signal ineffective~\cite{MSSR}. Furthermore, the significant variability in rollout lengths makes such large-scale sampling not only computationally prohibitive but also a source of severe training instability~\cite{vc_ppo}.
Conversely, coupled value models, such as Proximal Policy Optimization (PPO), offer a theoretical advantage by explicitly estimating expected returns. However, they struggle with a fundamental coupling dilemma. Because the value function relies on the policy's parameters, mechanisms for synchronizing them~\cite{VAPO,alphamath_almost_zero,ppo_mcts} must continuously track a non-stationary target. This requirement forces the value model to adapt to rapid distribution shifts, a process that is computationally expensive and frequently triggers training oscillations~\cite{VAS}.

\paragraph{Budget Allocation and Efficient Sampling in RL.}
Recent advancements in RL for LLMs have pivoted from uniform sampling to dynamic budget allocation, striving to maximize gradient efficiency within computational limits. This body of research primarily optimizes allocation based on \textit{problem difficulty} and \textit{learning potential}. \citet{cures} provide a theoretical foundation by establishing that the optimal rollout quantity should be proportional to gradient variance, effectively directing the budget toward problems situated at the model's capability boundary. Adopting a combinatorial perspective, \citet{knapsack_rl} propose Knapsack RL, which frames budget assignment as a classical knapsack problem; by modeling tasks as items with specific costs and values, defined by non-zero gradient probabilities and information gain, this method optimally distributes resources to maximize total learning potential. Complementing these approaches, \citet{DOTS} introduce DOTS, which predicts adaptive difficulty using a reference set to prioritize samples with pass rates near 0.5, while incorporating rollout replay to curtail generation overhead. To execute difficulty estimation dynamically, \citet{DARS} implement a two-stage mechanism that assesses difficulty via pre-rollouts before rebalancing the budget toward harder queries using a hardness-weighted schedule. Finally, \citet{greso} extends this adaptability to the temporal dimension by leveraging historical reward dynamics to filter out prompts with persistent zero-variance and adaptively adjust batch sizes to ensure a sufficient quota of effective gradients.

\paragraph{LLM Inference Routing via Capability Prediction.}
To circumvent the latency costs of online probing, recent research in inference routing has focused on around offline capability prediction, which estimates model proficiency using static representations. This domain generally splits into \textit{learned latent representations} and \textit{reference-based profiling}. Adopting a collaborative filtering perspective, \citet{EmbedLLM} propose EmbedLLM, which treats the instruction-model relationship as a matrix completion problem, deriving compact embeddings from historical logs to predict performance gaps. Focusing on transferable features, \citet{RouterDC} introduce RouterDC, which optimizes a query-aware router by learning distinct model identifiers through gradient backpropagation, effectively encoding model traits into the routing layer. Similarly, \citet{RouteLLM} develop Zooter, which augments these learned proxies by training on large-scale human preference data, aligning routing decisions with nuanced quality judgments rather than simple correctness.
Complementing these latent approaches, other works explicitly map capability boundaries using \textit{anchor} data. \citet{Model-SAT} formalize this by representing models via their accuracy distributions across specific benchmark dimensions (\textit{e.g.}, MMLU). To capture more granular semantic dependencies, \citet{avengers} introduce The Avengers, which partitions the semantic space of a validation set into clusters; routing is then determined by historical F1 scores within the cluster most similar to the incoming query. Finally, addressing the challenge of generalization to new models, \citet{UMR} propose Universal Model Routing (UMR), which utilizes a correctness vector, a binary signature of a model's success on a fixed anchor set, as a universal feature representation to predict compatibility with unseen instructions.

\begin{figure*}[t]
    \centering
    \includegraphics[width=0.93\textwidth]{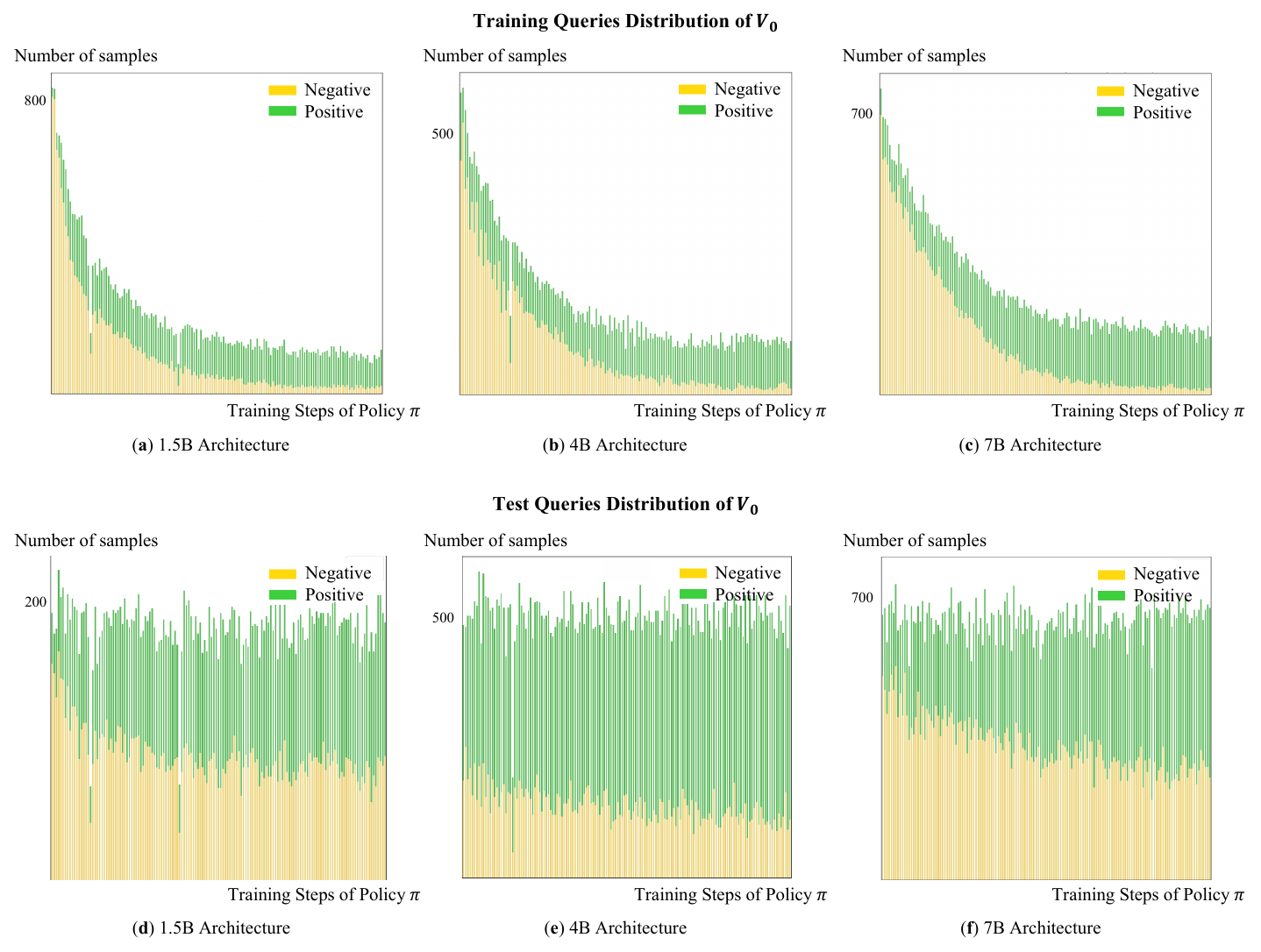}
    \caption{\textbf{Distribution of Training and Test Queries for $V_0$ across Policy Training Steps.} 
We visualize the number of positive (green) and negative (yellow) samples utilized for training (Top Row) and testing (Bottom Row) $V_0$ at each checkpoint of the policy $\pi$. 
The columns correspond to the three distinct architectures: \textbf{(a, d)} DeepSeek-R1-Distill-Qwen-1.5B, \textbf{(b, e)} Qwen3-4B-Instruct, and \textbf{(c, f)} Qwen2.5-7B-Instruct. 
The x-axis represents the training progress of the policy model. 
Note that as the policy capability evolves during GRPO training, the ratio of positive to negative samples dynamically shifts, which $V_0$ should track.}
    \label{fig:queries_distribution}
\end{figure*}


\end{document}